\definecolor{darkblue}{rgb}{0,0,0.5}
\definecolor{turquoise}{RGB}{64, 224, 208}
\crefname{section}{Sec.}{Secs.}
\crefname{proposition}{Prop.}{Props.}
\crefname{lemma}{Lem.}{Lems.}
\crefname{model}{Mod.}{Mods.}
\crefname{appendix}{App.}{Apps.}
\crefname{theorem}{Thm.}{Thms.}
\newtheorem{theorem}{Theorem}
\definecolor{LightGray}{gray}{0.95}
\DeclarePairedDelimiter\abs{\lvert}{\rvert}%
\DeclarePairedDelimiter\norm{\lVert}{\rVert}%
\let\oldabs\abs
\def\abs{\@ifstar{\oldabs}{\oldabs*}}
\let\oldnorm\norm
\def\norm{\@ifstar{\oldnorm}{\oldnorm*}}
\lstdefinestyle{custom}{
  breaklines=true,
  breakatwhitespace=true,
  basicstyle=\footnotesize\ttfamily,
  backgroundcolor=\color{LightGray},
  columns=fullflexible,
  showstringspaces=false,
  breakindent=0pt,
  xleftmargin=3pt,
  framesep=3pt,
  frame=leftline,
  rulecolor=\color{LightGray},
  postbreak=\mbox{$\hookrightarrow$\space\space},
}
\title{Recursive Decomposition with Dependencies for Generic Divide-and-Conquer Reasoning}
\author{%
  Sergio Hernández-Gutiérrez\\
  Department of Computer Science\\
  Aalto University\\
  Otakaari 24, 02150 Espoo, Finland\\
  \texttt{sergio.hernandezgutierrez@aalto.fi}\\
  \And
  Minttu Alakuijala\\
  Department of Computer Science\\
  Aalto University\\
  Otakaari 24, 02150 Espoo, Finland\\
  \texttt{minttu.alakuijala@aalto.fi}\\
  \AND
  Alexander V. Nikitin\\
  Department of Computer Science\\
  Aalto University\\
  Otakaari 24, 02150 Espoo, Finland\\
  \texttt{alexander.nikitin@aalto.fi}\\
  \And
  Pekka Marttinen\\
  Department of Computer Science\\
  Aalto University\\
  Otakaari 24, 02150 Espoo, Finland\\
  \texttt{pekka.marttinen@aalto.fi}\\
}
\begin{document}

\maketitle

\begin{abstract}
Reasoning tasks are crucial in many domains, especially in science and engineering. Although large language models (LLMs) have made progress in reasoning tasks using techniques such as chain-of-thought and least-to-most prompting, these approaches still do not effectively scale to complex problems in either their performance or execution time. Moreover, they often require additional supervision for each new task, such as in-context examples. In this work, we introduce Recursive Decomposition with Dependencies (RDD), a scalable divide-and-conquer method for solving reasoning problems that requires less supervision than prior approaches. Our method can be directly applied to a new problem class even in the absence of any task-specific guidance. Furthermore, RDD supports sub-task dependencies, allowing for ordered execution of sub-tasks, as well as an error recovery mechanism that can correct mistakes made in previous steps. We evaluate our approach on two benchmarks with six difficulty levels each and in two in-context settings: one with task-specific examples and one without. Our results demonstrate that RDD outperforms other methods in a compute-matched setting as task complexity increases, while also being more computationally efficient.
\end{abstract}

\section{Introduction}
\label{sec:introduction}
Large language models (LLMs) have been proven successful as the backbone of generic intelligent systems \citep{openaiIntroducingChatGPT, openaiGPT4TechnicalReport2024, gemini2023, anthropicIntroducingNextGeneration2024}. These models possess strong conversational skills \citep{radfordLanguageModelsAre2019, brownLanguageModelsAre2020, chowdheryPaLMScalingLanguage2023, touvronLLaMAOpenEfficient2023}, making them an effective tool to interact with users in a wide range of settings. However, the autoregressive architecture of transformer-based language models limits the complexity of problems that can be solved. As a result, language models struggle with reasoning tasks \citep{nogueiraInvestigatingLimitationsTransformers2021, deletangNeuralNetworksChomsky2022, dziriFaithFateLimits2023a, chenProgramThoughtsPrompting2023a}, from multi-hop question-answering and symbolic manipulation to arithmetic and logical inference. Recent methods have been proposed to increase the performance of LLMs on reasoning problems \citep{weiChainofThoughtPromptingElicits2022,wangSelfConsistencyImprovesChain2022, zhouLeasttoMostPromptingEnables2022, yaoTreeThoughtsDeliberate2023, bestaGraphThoughtsSolving2024,khotDecomposedPromptingModular2022}. In particular, many of these techniques focus on eliciting step-by-step solving processes or decomposition strategies.

Nonetheless, we identify three issues affecting the currently available approaches. First, previous methods typically build a single reasoning chain \citep{weiChainofThoughtPromptingElicits2022,wangSelfConsistencyImprovesChain2022,zhouLeasttoMostPromptingEnables2022,khotDecomposedPromptingModular2022,yaoTreeThoughtsDeliberate2023}, without supporting independent, parallelizable sub-tasks, and allow for limited or no communication between alternative chains. When decomposition is supported, prior work has been limited to fully separable decompositions, without supporting dependencies between sub-tasks \citep{zhang2024examination}. Second, existing methods often require the user to provide task-specific examples \citep{khotDecomposedPromptingModular2022} or a pre-defined decomposition strategy \citep{zhouLeasttoMostPromptingEnables2022,zhang2024examination,bestaGraphThoughtsSolving2024}, making them difficult to incorporate into generic intelligent systems. Third, the number of tokens required to express their reasoning chain frequently scales quadratically with respect to the complexity of the task at hand \citep{zhouLeasttoMostPromptingEnables2022}, which becomes an even more pressing issue when considering the limited context window and high computational cost of LLMs. Additionally, as we empirically show, their downstream performance decays rapidly with increasing task complexity. Our work addresses these issues, empowering LLMs to solve more complex reasoning problems with decomposition strategies that are readily applicable to real-world generic intelligent systems.

We propose Recursive Decomposition with Dependencies (RDD), a flexible and task-agnostic framework for task decomposition with desirable scaling properties and high potential for parallelization. Specifically, we use in-context learning to decompose reasoning problems into sub-problems, solve these individually, and then merge their solutions to solve the original problem. The model can optionally model dependencies between the sub-tasks proposed during the decomposition step.These steps are applied recursively: sub-tasks are repeatedly broken down until either a base case is reached or specific stopping criteria are met. By incorporating relevant in-context examples, sub-task indices, and a scheduler, our method can automatically generate new sub-tasks. It does this by using the results of completed sub-tasks as input for others, allowing the decomposition structure to extend from a simple tree to a more complex directed acyclic graph (DAG). These generic operations (\texttt{split}, \texttt{solve}, and \texttt{merge}) are applicable across tasks and do not require user intervention. Our decomposition strategy reduces the strain on the context window of the model and shortens the runtime per input problem.

Our contributions are:
\begin{enumerate}[topsep=0pt,itemsep=0pt]
    \item \looseness=-1 We introduce a novel method, Recursive Decomposition with Dependencies (RDD), for solving reasoning problems  (\cref{sec:methodology}) via decomposition into smaller subtasks with dependencies,
    \item We empirically demonstrate the effectiveness of our approach, both with and without task-specific demonstrations  (\cref{sec:empirical-evaluation}),
    \item We evaluate our method on one task decomposable into independent sub-problems and another requiring dependency modeling (\cref{sec:empirical-evaluation}).
\end{enumerate}

\section{Recursive Decomposition with Dependencies}
\label{sec:methodology}
\begin{figure*}[!b]
    \centering
    \includegraphics[width=1.0\linewidth]{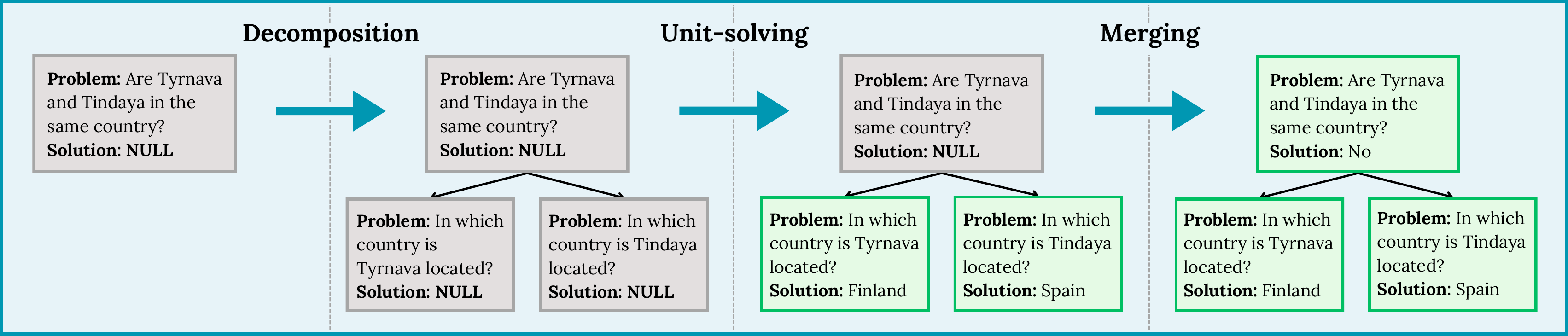}
    \caption{The decomposition methodology pipeline: decomposing, unit-solving, and merging. Nodes in gray represent unsolved problems, while nodes in green represent solved problems.}
    \label{fig:method-pipeline}
\end{figure*}

We assume access to a pretrained LLM and leverage in-context learning and prompting strategies to implement RDD. Our method consists of three steps: \emph{decomposing}, \emph{unit-solving}, and \emph{merging}. We will refer to the initial problem provided by the user as the \textit{root problem} $x_0 \in T^l$, where $T$ is a set of tokens, and $l$ is the length of the prompt.
\textbf{Decomposition:} The root problem is initially decomposed into sub-problems by prompting the LLM with the decomposition meta-task. The model generates either a list of sub-problems or the response ``\textit{This is a unit problem}.'' If the problem is identified to be a unit case, RDD prompts the model to solve it directly. Otherwise, the decomposition meta-task is repeated with each of the sub-problems recursively.
\textbf{Unit-solving:} Unit cases can be solved with either direct input-output prompting, any other existing reasoning method, or by employing an external tool.
\textbf{Merging:} For each set of already-solved sub-problems, we prompt the LLM to merge their solutions to solve their parent problem; we perform this process until we reach the root problem, at which point we obtain the final solution to $x_0$ via the last merging step. A visual representation of this procedure is provided in \cref{fig:method-pipeline} for a non-recursive case of depth one.

\subsection{Notation and definitions}
\label{sec:methodology:notation}

We estimate the conditions under which applying RDD is beneficial compared to a direct solution attempt by estimating the success rates of the decomposition, unit solving, and merging steps. We can define the function predicting the accuracy of the decomposition step as $\phi_{\text{d}, \mathcal{M}_\theta, \mathcal{C}} \left( c, n \right) \in [0, 1]$, and similar functions for the unit-solving and merging steps as $\phi_{\text{u}, \mathcal{M}_\theta, \mathcal{C}} \left( c, n \right) \in [0, 1]$ and $\phi_{\text{m}, \mathcal{M}_\theta, \mathcal{C}} \left( c, n \right) \in [0, 1]$, respectively, where $c$ is the input problem class (e.g., multiplying two numbers), $n$ is the within-class difficulty of the input problem (a problem-specific metric; typically, the size of the input data), $\mathcal{M}_\theta$ is a language model which executes the decomposition, unit-solving and merging steps, and $\mathcal{C}$ is a classifier (in our method, implemented by $\mathcal{M}_\theta$) which returns \texttt{true} if a $(c, n)$ pair constitutes a unit problem and \texttt{false} otherwise. We may refer to the within-class difficulty also as just \textit{difficulty}. Other variables, such as the maximal branching factor or \textit{width} $w$ of each decomposition step, may influence the expected accuracy of RDD, but are treated as fixed constants in this notation. We may also omit $\mathcal{M}_\theta$ and $\mathcal{C}$ for conciseness, assuming them to be constant. We define $\phi_{\text{RDD}}$ to be the overall accuracy of RRD applied with a model $\mathcal{M}_\theta$ and classifier $\mathcal{C}$ on a problem of class $c$ with difficulty $n$. We can also relax our existing notation for all aforementioned functions to only depend on a given random variable $X_0$ from the domain $\mathcal{P}_{c_0, n_0}$, the set of root problem instances $x_0$ belonging to class $c_0$ and with difficulty of $n_0$. We can approximate the individual and overall expected accuracies empirically by measuring their success rates for a large enough set of problem instances, which we explore in \cref{sec:evaluation:error-analysis} within the scope of our evaluation setting.

\paragraph{Transition points} Let $c_0$ be a fixed problem class solvable by $\mathcal{M}_{\theta}$. We expect that, as we lower the within-class difficulty $n_0$ of the problem instances $X_0$ of class $c_0$, we get $\phi_{\text{u}} \left( X_0 \right) \lessapprox 1$. In such cases, we expect $\phi_{\text{RDD}} \left( X_0 \right) \leq \phi_{\text{u}} \left( X_0 \right)$, since decomposing $X_0$ will likely not improve the accuracy of the unit cases sufficiently to compensate for the additional decomposition and merging steps. Thus, we hypothesize the existence of a performance transition point at within-class difficulty $n^{\ast}$, after which $\phi_{\text{RDD}} \left( c_0, n_0 \right) \geq \phi_{\text{u}} \left( c_0, n_0 \right)$ will hold $\forall n_0 \geq n^{\ast}$. We empirically observe such transition points.

\subsection{Methodology}
\label{sec:recursive-decomposition:methodology}

\begin{figure}[ht]
    \centering
    \includegraphics[width=1.0\linewidth,keepaspectratio]{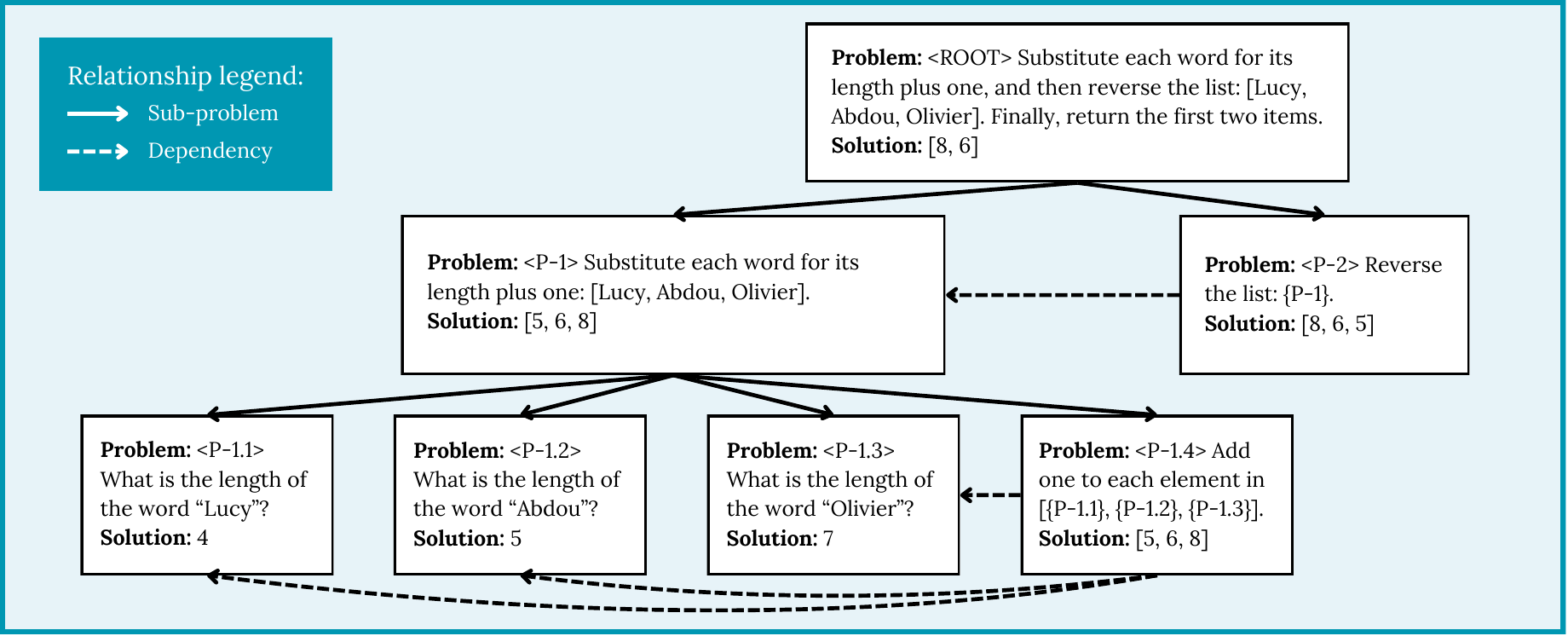}
    \caption{An example of the decomposition graph generated by the RDD method.}
    \label{fig:example-rdd}
\end{figure}

RDD enables the language model $\mathcal{M}_\theta$ to suggest dependencies between sub-problems in the decomposition step.
We request the model to assign a unique identifier (e.g., "\textit{P-1}" and "\textit{P-2}") to each proposed sub-problem. We also encourage the model to cross-reference solutions from other sub-problems via their identifiers (e.g., "\textit{Reverse the following list: \{P-1\}}"). This construction implies that dependent problems cannot be decomposed nor solved without first solving their dependencies. Using the dependencies specification generated by the model, we can now define edges between sub-problems with a common parent in the decomposition tree, resulting in a directed acyclic graph (DAG). \cref{fig:example-rdd} shows an example decomposition graph produced by RDD.

\paragraph{Information flow}

When solving sub-problems, we aim to minimize the amount of information about ancestor tasks included in the context to isolate the relevant information and achieve better scaling properties with respect to the difficulty of the root problem. When prompting for a decomposition, only the current problem description is provided to the model, along with a description and a set of demonstrations of the meta-task (e.g., decomposition or merging). We do not include the history of ancestor problem descriptions, which increases in size with the depth of the recursion process; this feature requires a language model to always provide all needed data and instructions in the description of each sub-problem. In the merging step, the decomposition of the current level and its sub-solutions are provided.

\paragraph{Maximizing generic applicability}

We consider a fixed set of generic meta-task demonstrations included in the decomposition, merging, and unit-case prompts. This set exhibits a diverse range of tasks. We show experimentally that this same set of generic examples is effective for guiding decomposition for new, unseen tasks. The examples we use in our evaluation are available in \cref{sec:appendix:examples}. Moreover, the meta-tasks RDD performs are fixed, regardless of the input problem, and thus also task-invariant. The generality of the demonstrations is a spectrum and thus presents a trade-off between the degree of applicability of the methodology and the degree of assistance it provides to $\mathcal{M}_\theta$, the latter variable being correlated with performance. To increase performance in a given domain (e.g., programming assistance) at the cost of generality, the generic demonstrations can be selected from the same domain (e.g., coding problems).

\begin{wrapfigure}{R}{0.55\textwidth}
    \vspace{-5ex}
    \begin{minipage}{0.55\textwidth}
    
        \begin{algorithm}[H]
        \footnotesize
        \caption{\textsc{ScheduleBFS}}
        \label{alg:solve-bfs}
        \textbf{Input:} problem
        \begin{algorithmic}[1]
            \State unsolved $\gets$ empty queue
        
            \For{dependency $\in$ problem.dependencies}
                \State \Call{ScheduleBFS}{dependency}
            \EndFor
            
            \State sub-problems $\gets$ \Call{Decompose}{problem}
            \For{sub-problem $\in$ sub-problems}
                \State Add sub-problem to unsolved
            \EndFor
        
            \While{unsolved \textbf{is not} empty}
                \State next-problem $\gets$ unsolved.front
        
                \For{dependency $\in$ next-problem.dependencies}
                    \State \Call{ScheduleBFS}{dependency}
                \EndFor
                
                \State sub-problems $\gets$ \Call{Decompose}{next-problem}
                \For{sub-problem $\in$ sub-problems}
                    \State Add sub-problem to unsolved
                \EndFor
            \EndWhile
        
            \State \Return \Call{ScheduleDFS}{problem, [ ]}
        \end{algorithmic}
        \end{algorithm}

    \end{minipage}
    \vspace{-2ex}
\end{wrapfigure}

\paragraph{Scheduler} The scheduler defines the execution order of the decomposition, unit-solving, and merging steps. The order of decomposition defines the structure of the resulting graph. The root problem is expanded via breadth-first search (BFS) traversal until an unsolved dependency is found, in which case the current traversal process halts and executes the BFS routine with the dependency as the root problem. A depth-first search (DFS) traversal schedules the unit-solving and merging steps. The complete procedure we employ is explicitly reflected in \cref{alg:solve-bfs}. The \textsc{ScheduleDFS} procedure is provided in \cref{sec:appendix:scheduler}; the \textsc{Decompose} routine corresponds to the decomposition step. An example execution of this algorithm shown in \cref{fig:example-rdd} is provided in \cref{sec:appendix:example-run}. For a parallelized implementation, the scheduler synchronizes the execution of sub-problems with inter-dependencies.

\section{Empirical Evaluation}
\label{sec:empirical-evaluation}
The hypotheses we aim to validate through our empirical study are the following:

\begin{itemize}[topsep=0pt,itemsep=0pt]
    \item \textbf{Hypothesis 1}: RDD increases accuracy in complex reasoning problems over state-of-the-art methods in a compute-matched setting.
    \item \textbf{Hypothesis 2}: The recursive decomposition technique augments the model's reasoning abilities even in the absence of task-specific data.
    \item \textbf{Hypothesis 3}: Solving reasoning problems via RDD reduces the time taken to reach a solution compared to solving the entire problem via step-by-step prompting strategies.
    \item \textbf{Hypothesis 4}: RDD reduces the average amount of tokens per generation process, thus lessening the strain on the context window.
\end{itemize}

As baselines, we consider Chain-of-Thought (CoT; \citet{weiChainofThoughtPromptingElicits2022}) and Least-to-Most prompting (LtM; \citet{zhouLeasttoMostPromptingEnables2022}). We use self-consistency (SC; \citep{wangSelfConsistencyImprovesChain2022}) to align the amount of computation between our method and the baselines. In our implementation of SC, we employ the LLM itself to decide the most consistent answer given the set of sampled solutions. The first solution candidate is sampled greedily, while the rest are sampled with a temperature of 0.7 to produce a variety of reasoning chains. Given that each SC sample will produce a large number of generated tokens and including them all in a single context window can be challenging, we propose to use binary search to find the most consistent answer. RDD employs a single CoT or LtM chain at the unit-solving prompt but does not use self-consistency to aggregate multiple answers. We evaluate all methods on benchmark tasks of increasing difficulty. For each difficulty, scores are averaged across the same 100 randomly sampled problem instances for all methods. We employed the instruction-tuned Llama 3 70B \citep{IntroducingMetaLlama} as the underlying model. This model was run on NVIDIA A100 and H100 GPUs. \cref{sec:appendix:resource-usage} provides resource usage statistics for all experiments in this section. It is additionally possible to parallelize the solving of independent sub-problems for a speedup.

\subsection{Task-specific Experiments}
\label{sec:evaluation:task-specific}

\begin{wrapfigure}{R}{0.5\textwidth}
    \centering
    \vspace{-1.4cm}
\includegraphics[width=1.0\linewidth]{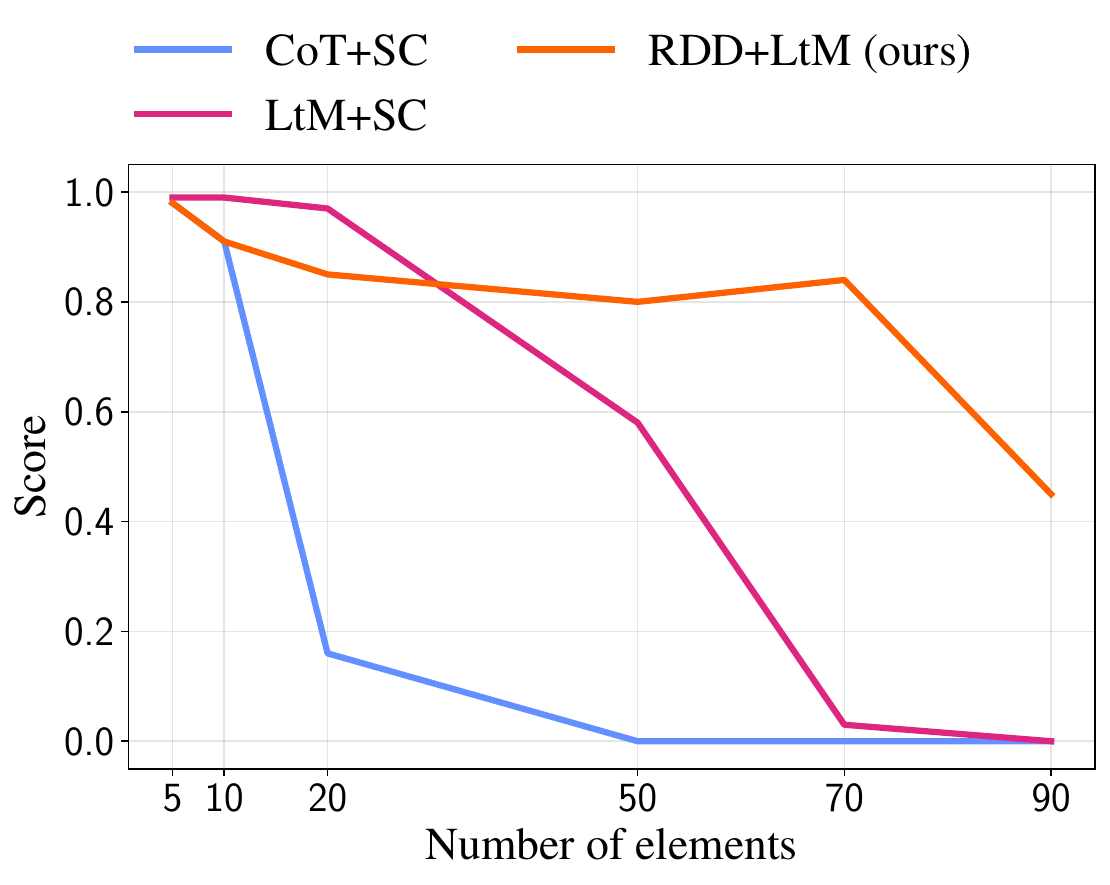}
    \caption{An evaluation of RDD against CoT \citep{weiChainofThoughtPromptingElicits2022} and LtM \citep{zhouLeasttoMostPromptingEnables2022} with self-consistency (SC; \citet{wangSelfConsistencyImprovesChain2022}) on the letter concatenation benchmark in the task-specific few-shot setting. Our system uses LtM at the unit-solving step; we refer to it as RDD+LtM.}
    \label{fig:evaluation:specific-letterconcat}
    \vspace{-.5cm}
\end{wrapfigure}

We first experiment with task-specific examples to validate our approach. Each call to the model (i.e., all baseline calls, as well as the decomposition, unit-solving, and merging steps) operates in a 5-shot in-context setting. These examples are in-distribution with respect to the problem class $c_0$, but out-of-distribution with respect to the within-class difficulty $n_0$. For this experiment, we evaluate on the letter concatenation problem, which asks the LLM to concatenate the $i$'th character of each word in a list. This task can be recursively decomposed into independent sub-problems, thus not requiring dependency modeling. The difficulty $n_0$ is the number of words in the list.

\cref{fig:evaluation:specific-letterconcat} demonstrates the results for six input sizes. The score is computed as the average of an exact match metric. We find RDD to outperform the baselines as the task complexity increases, confirming Hypothesis 1. For $n_0 < 20$, it does not seem beneficial to recursively decompose the problem. We observe a transition point $20 < n^{\ast}_0 < 50$ with respect to LtM+SC. In \cref{tab:resource-usage:letterconcat-specific} (\cref{sec:appendix:resource-usage}), we show that RDD also reduces execution time with respect to the baselines.

\subsection{Generic Experiments}
\label{sec:evaluation:generic}

\begin{figure}[ht]
    \centering
    \captionsetup[subfigure]{justification=centering}
    \begin{subfigure}[t]{0.49\linewidth}
        \centering
        \includegraphics[width=1.0\linewidth]{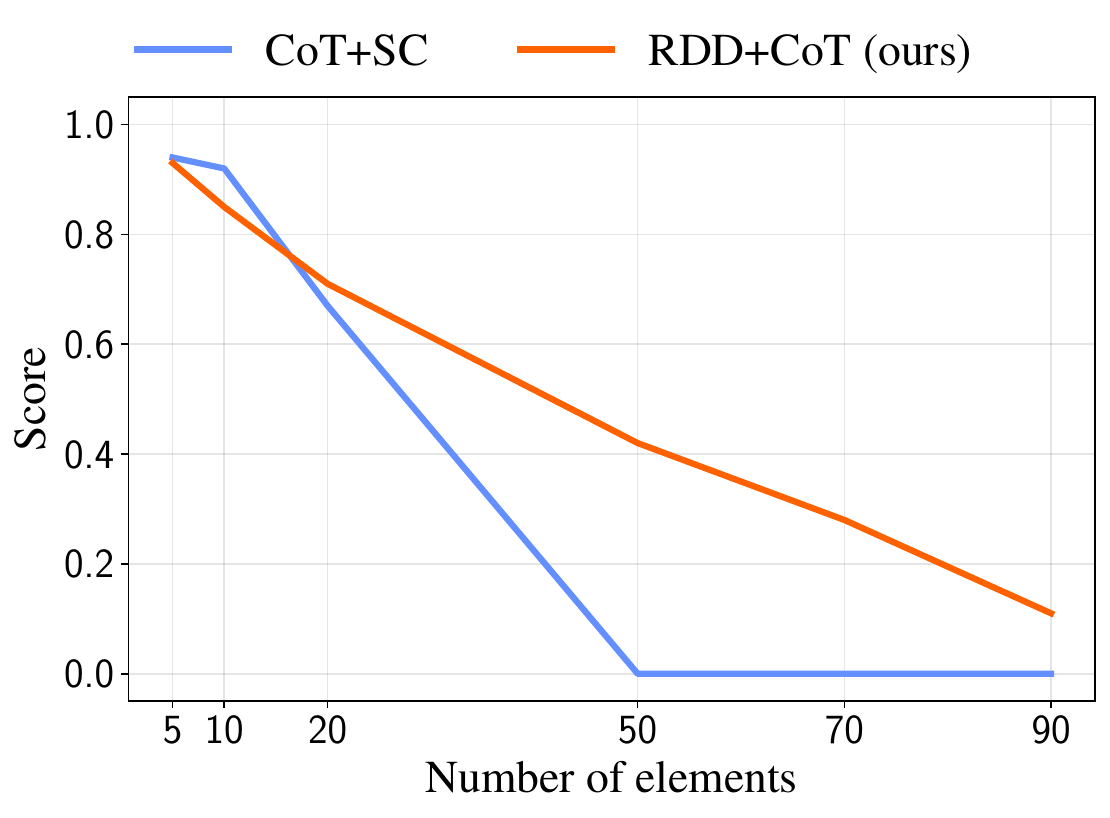}
        \caption{Letter Concatenation}
        \label{fig:evaluation:generic-letterconcat}
    \end{subfigure}
    \hfill
    \begin{subfigure}[t]{0.49\linewidth}
        \centering
            \includegraphics[width=1.0\linewidth]{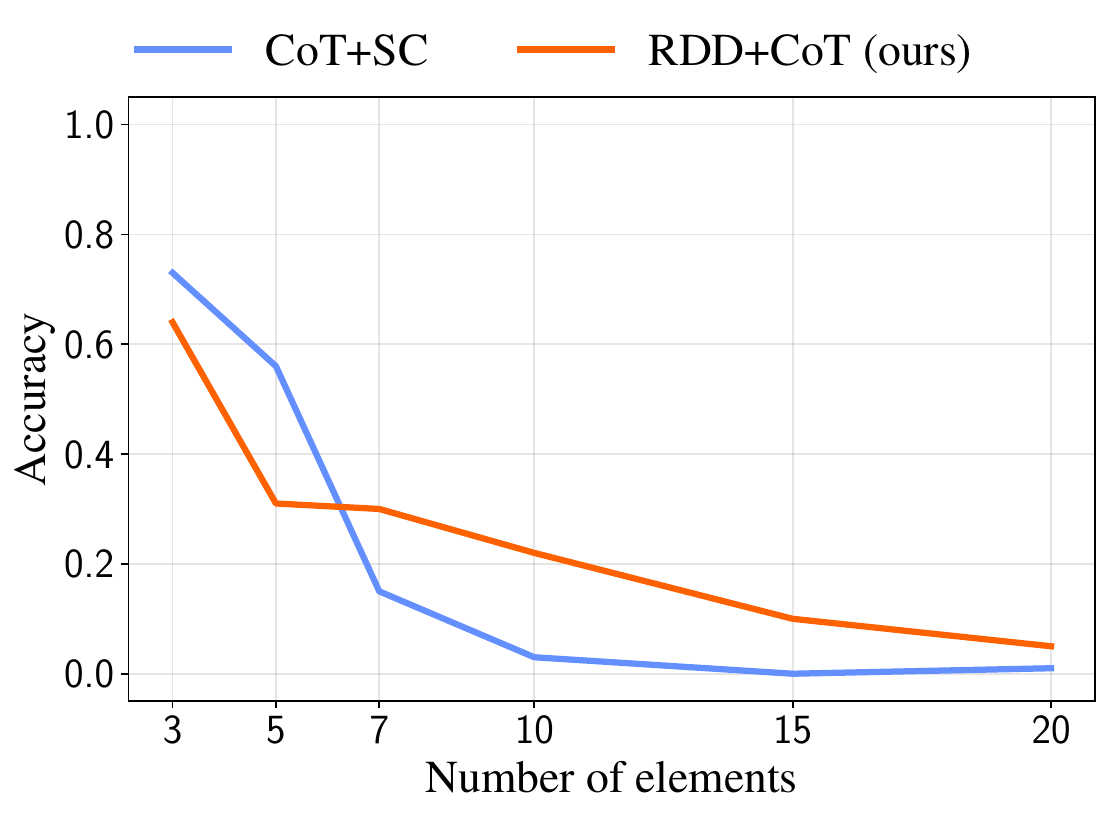}
    \caption{Length Reversal}
    \label{fig:evaluation:deps-lengthrev}
    \end{subfigure}

    \caption{An evaluation of RDD against CoT with self-consistency (SC) the generic few-shot setting. Our system uses CoT at the unit-solving step; we refer to it as RDD+CoT.}
    \label{fig:evaluation:generic}
\end{figure}

To evaluate Hypothesis 2, we verify whether the advantage of RDD is maintained when task-specific in-context examples are replaced with generic examples. These generic examples depict a wide range of tasks: arithmetic, coding, symbolic manipulation, multi-step logic reasoning, and multi-hop QA, but exclude the tested problem class $c_0$. In this experiment, we operate in a 5-shot setting for the unit-solving and merging steps, and a 7-shot setting for the decomposition step. We provide the included examples in \cref{sec:appendix:examples}. We also improved the description of the problem with respect to the one used for the experiments described in \cref{sec:evaluation:task-specific}: the model is tasked with concatenating each character using a space as a delimiter. If the characters are concatenated without a special separator, their tokenization may change when encoding them for subsequent steps in the autoregressive generation process. This behavior has been also described by \citet{mirchandaniLargeLanguageModels2023} for tasks represented in grids of numbers. The rest of the setup is the same as previously described.

For this set of experiments, we compare RDD with CoT as the unit-solving method (RDD+CoT) against CoT with self-consistency (CoT+SC). In \cref{fig:evaluation:generic-letterconcat}, we can observe that RDD again outperforms this baseline as the difficulty of the task increases. We see a performance transition point $10 < n^{\ast}_0 < 20$ with respect to CoT+SC. We again observe considerable time savings; a complete account of resource usage can be found in \cref{tab:resource-usage:letterconcat-generic} (\cref{sec:appendix:resource-usage}).

\subsection{Sub-tasks with Dependencies}
\label{sec:modeling-dependencies}

This section shows the results of our evaluation with RDD on the task of length reversal. To solve this task, the model must substitute each word in a list with its length (number of characters), and then reverse the order of the items in the list. This task benefits from dependency modeling in the decomposition step. We compare our method against CoT+SC with generic in-context examples. We use five examples for the unit-solving and merging steps and eight for the decomposition step. The examples showcase several different decomposition and merging patterns and also cover a wide range of problem classes as previously described. We also augment the meta-prompt for the decomposition step to include instructions describing how to suggest dependencies (see \cref{sec:appendix:prompts}).

\cref{fig:evaluation:deps-lengthrev} demonstrates the results of our experiment on the length reversal benchmark. We observe a transition point $5 < n^{\ast} < 7$, after which RDD outperforms CoT. \cref{tab:resource-usage:lengthrev-deps-generic} demonstrates that the time taken by RDD to complete the experiment is considerably lower than that of the CoT method.

\subsection{Error analysis}
\label{sec:evaluation:error-analysis}

\begin{figure}[ht]
    \centering
    \captionsetup[subfigure]{justification=centering}
    \begin{subfigure}[t]{0.49\linewidth}
        \centering
        \includegraphics[width=1.0\linewidth]{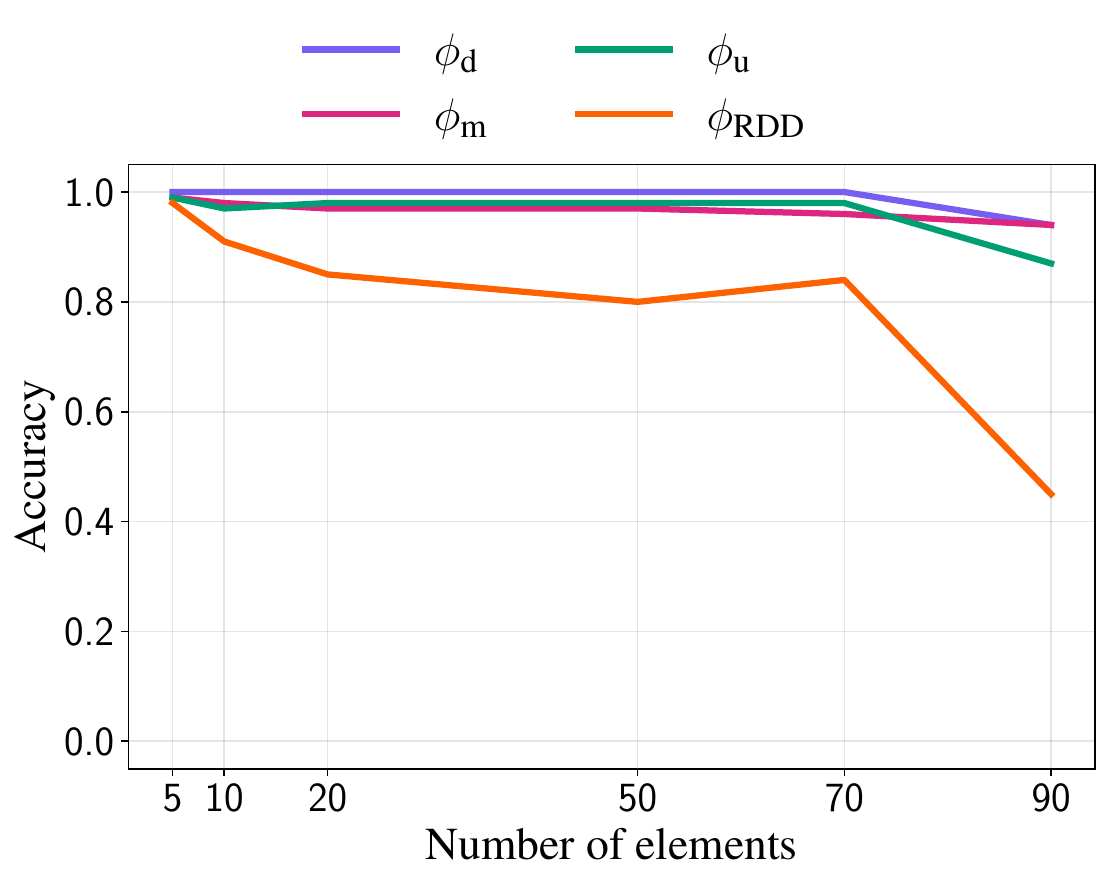}
        \caption{Error analysis for the task-specific setting.}
        \label{fig:evaluation:errors:letterconcat-specific}
    \end{subfigure}
    \hfill
    \begin{subfigure}[t]{0.49\linewidth}
        \centering
        \includegraphics[width=1.0\linewidth]{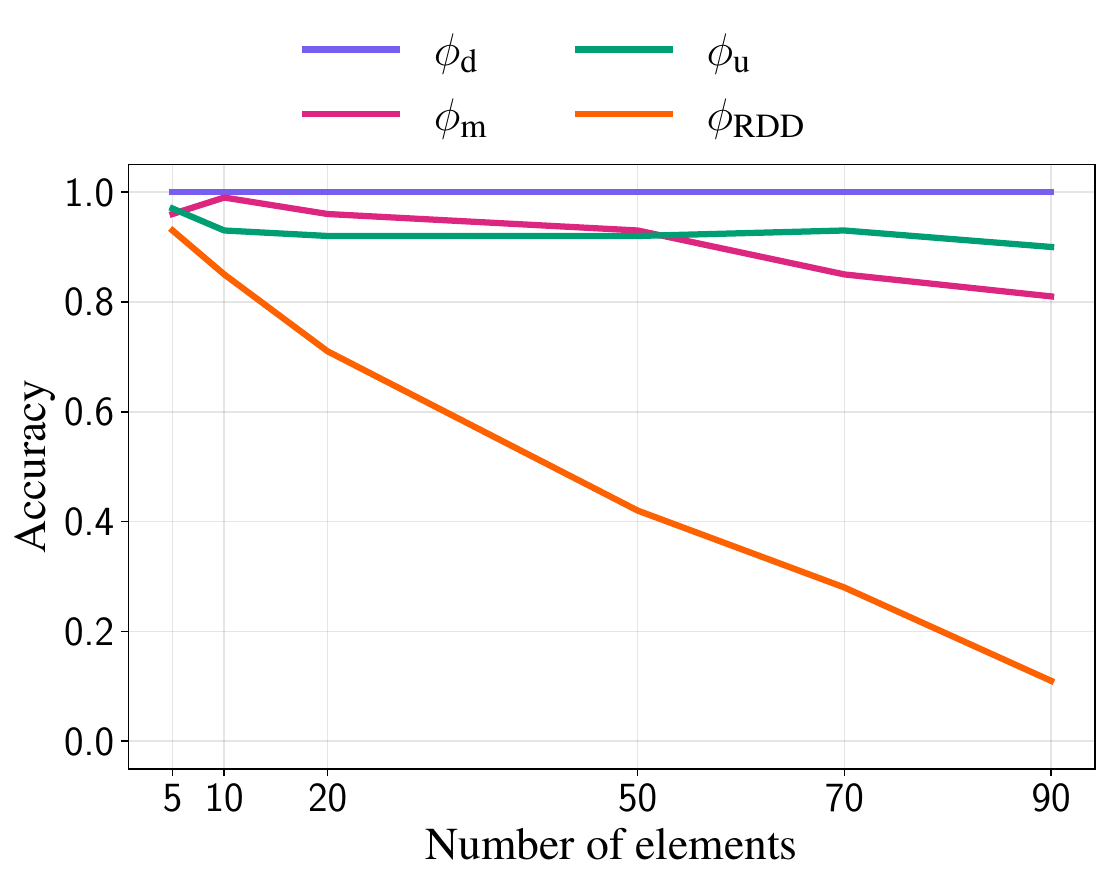}
        \caption{Error analysis for the generic setting.}
        \label{fig:evaluation:errors:letterconcat-generic}
    \end{subfigure}

    \caption{The error sources of the recursive decomposition approach in the letter concatenation benchmark with respect to $n_0$ (the size of the list in the problem) for task-specific in-context \textbf{(a)} and generic \textbf{(b)} experiments . $\phi_{\text{d}}$ corresponds to the observed success rate in the decomposition step, $\phi_{\text{m}}$ in the merging step and $\phi_{\text{u}}$ in the unit-case. The values are computed using all problem classes $c_i$ and within-class difficulties $n_i$ appearing in the decomposition graph. $\phi_{\text{RDD}}$ is the end-to-end accuracy.}
    \label{fig:evaluation:errors}
\end{figure}

\paragraph{Error sources} To analyze the sources of errors when employing RDD, we empirically quantify the accuracies $\phi_{\text{d}}$, $\phi_{\text{u}}$ and $\phi_{\text{m}}$ of the individual steps. Using these, we estimate the overall accuracy of the system $\phi_{\text{RDD}}$, as described in \cref{sec:methodology:notation}. We perform this analysis using the data from our experiments on the letter concatenation task. The results are shown in \cref{fig:evaluation:errors:letterconcat-specific} for the task-specific in-context setting and \cref{fig:evaluation:errors:letterconcat-generic} for the generic in-context one, with the exact numbers included in \cref{sec:appendix:error-analysis-data}. To compute these statistics, we identified the accuracies of the decomposition, unit solving, and merging steps. For decomposition and merging steps, we consider whether the problem at hand was decomposed or merged correctly. We employ an exact-match metric for all measured accuracies and average them for all instances of each task. Note that these values are averaged for all problem classes $c_i$ and within-class difficulties $n_i$ which appear recursively in the solving process; importantly, $\phi_{\text{u}}$ does not correspond to $\phi_{\text{u}} \left( c_0, n_0 \right)$. In \cref{fig:error-propagation} (for more detail, see \cref{sec:appendix:error-propagation}), we provide an example of an error in the unit case which was common in our evaluation; errors made when solving a sub-problem often carry over to their parent problems during the merging step, which eventually can produce an erroneous final solution of the root problem $x_0$.

\paragraph{Error recovery}

Our framework is capable of recovering from the errors from the individual steps. To elicit this behavior, we include the sequence "\textit{If you find any mistakes in the sub-solutions, you can fix the mistakes while you merge the sub-solutions}" in the merge step prompt. This is a key mechanism when modeling sub-problem dependencies: if the model does not follow the syntax to specify dependencies correctly, these may not be recognized by our parser. For instance, this may result in a sub-problem statement with missing data such as "\textit{Reverse the following list: }". The description-solution pair of this sub-problem will be straightforward to recognize as a mistake. Since we provide the model with the top problem description as well as the sub-problems' descriptions in the merging step, it can identify such issues and propose a merged solution correcting the erroneous sub-solutions. We have observed that, in these cases, the model simply regards the top problem as a unit case and attempts to solve it directly; if it succeeds, we deem this behavior as error recovery. In \cref{fig:error-recovery} (\cref{sec:appendix:error-recovery}), we can observe an example of the merging step in the root problem recovering from an error made when solving its sub-problems. The meaning of the merging step changes if an error recovery behavior is possible: the merging step becomes a special type of unit-solving the root problem with additional context (which may or may not be informative), and it is no longer dependent on the accuracy of unit-solving the sub-problems.

\subsection{Space and time efficiency}
\paragraph{Execution time} Hypothesis 3 has also been empirically proven by our results. In the tables provided in \cref{sec:appendix:resource-usage}, we can observe that the time RDD takes to complete experiments is lower than the time the baselines take. Note that these values include all samples and voting calls required by self-consistency; using vanilla CoT or LtM would be faster, but we strove to compare our method to baselines with access to similar computational resources. We hypothesize that higher time efficiency is achieved due to fewer output tokens generated by our implementation. Given the quadratic space complexity of the baseline methods with respect to the difficulty of the problems, we expect that recursively decomposing the root problem will lower the number of tokens generated by the model required to reach a solution. Since each output token is generated via a full forward pass of the underlying network, it is expected that a lower amount of forward passes will result in proportional time savings. Implementing the parallelization of independent steps in RDD would further increase time savings.

\looseness=-1
\paragraph{Reduced context length} By dividing the number of context and output tokens by the number of calls, we can see that Hypothesis 4 is confirmed: recursive decomposition helps alleviate issues relating to the overflow of the context window as the complexity of the tasks increases. We hypothesize this number is lower for RDD because of the space scaling properties of CoT-based methods.

\section{Related Work}
\label{sec:related-works}
\paragraph{Expressive power of the reasoning graph}
Methods based on step-by-step decomposition, such as Chain-of-Thought \citep{weiChainofThoughtPromptingElicits2022}, Socratic CoT \citep{shridharDistillingReasoningCapabilities2023}, Least-to-Most prompting \citep{zhouLeasttoMostPromptingEnables2022}, Plan-and-Solve prompting \citep{wangPlanandSolvePromptingImproving2023a}, iterative prompting \citep{wangIterativelyPromptPretrained2022a} PAL \citep{gaoPALProgramaidedLanguage2023a}, Parsel \citep{zelikmanParselAlgorithmicReasoning2023} or the method proposed by \citet{perezUnsupervisedQuestionDecomposition2020a} to decompose multi-hop QA tasks, can be understood as chain-like decompositions of a problem. Tree-of-Thoughts (ToT; \citep{yaoTreeThoughtsDeliberate2023}) builds a tree; however, the structure of this graph represents a sampling process, not a recursive decomposition. \citet{zhang2024examination} propose a tree-like recursive decomposition strategy, not considering sub-problem dependencies. Although DecomP \citep{khotDecomposedPromptingModular2022} performs steps in sequence in a chain-like fashion, it also implicitly models the solving process as a directed acyclic graph (DAG) via tool usage, but its structure needs to be demonstrated by the user for every problem class; this graph is also not modeling dependencies between sub-problems. Graph of Thoughts \citep{bestaGraphThoughtsSolving2024} explicitly uses a DAG, but both its structure and the meaning of the nodes (i.e., sub-problem descriptions) must be provided by the user for every problem instance. Instead, RDD enables the model to explicitly model dependencies without user input beyond the initial problem description, resulting in a DAG.

\paragraph{Generic applicability} The previously mentioned methods modeling the reasoning process as a chain are often able to demonstrate their decomposition strategies with generic in-context examples (e.g., few-shot CoT by \citet{brownLanguageModelsAre2020}), that is, without the user needing to provide examples for each new problem instance. However, some of these strategies cannot be applied to any arbitrary problem class, such as LtM \citep{libbyComparisonMosttoLeastLeasttoMost2008, zhouLeasttoMostPromptingEnables2022}. More complex methods \citep{yaoTreeThoughtsDeliberate2023, khotDecomposedPromptingModular2022, bestaGraphThoughtsSolving2024} require extensive user-generated input to model the reasoning process. In contrast, RDD can model complex reasoning structures without unrealistic data requirements at runtime.

\paragraph{Parallelization of problem-solving process}
Similar to Skeleton-of-Thought \citep{ningSkeletonofThoughtPromptingLLMs2023}, we enable parallel decoding of the solution to a problem by identifying independent steps in the reasoning that can be computed in parallel.
However, SoT does not decompose reasoning chains; the authors state that it is challenging to apply their method on problems that require step-by-step thinking. Other aforementioned methods relying on sequential decomposition do not allow for the parallelization of reasoning steps.

\section{Conclusion}
\label{sec:conclusion}
In this work, we have developed a recursive decomposition technique for LLMs allowing for sub-problem dependencies. Our empirical evaluation considered two benchmarks on six levels of increasing difficulty and two settings of varying degrees of task-specific resource availability. Based on our experiments, we also analyzed the nature of the errors our method makes during the solving process. RDD outperforms state-of-the-art baselines as the difficulty of the tasks increases. Moreover, RDD is allows for ordered execution of subtasks and their dependencies, parallelizable by design, allows for error recovery, and achieves lower time and space complexities than the existing baselines.

Our results show that recursively decomposing reasoning problems with general-purpose LLMs is feasible, and can provide significant performance and resource usage benefits for complex tasks. The design of the RDD methodology and its demonstrated viability without task-specific support remove integration barriers towards existing LLM-based systems. We believe that our proposed method and our findings can be used to advance the reasoning capabilities of real-world language-based AI systems to improve type 2 reasoning capabilities. Future work may explore alternative implementations of the unit-problem classifier, quantify the speedup achieved by a parallelized implementation of RDD, and develop improved strategies for decomposition.

\textbf{Limitations}
It is important to acknowledge that other methods, such as CoT, can outperform our methods for easier problems (i.e., before the transition point). Additionally, some problems, especially in natural sciences, are not readily decomposable and might be challenging for RDD.

\bibliography{main}
\bibliographystyle{iclr2025_conference}

\clearpage
\appendix

\clearpage
\section{Requirements for improved efficacy}
\label{sec:appendix:desideratum}
We can recursively formulate the expected accuracy of our decomposition method as
\begin{equation}
    \phi_{\text{RDD}} \left( X_0 \right) = \phi_{\text{d}} \left( X_0 \right) \phi_{\text{m}} \left( X_0 \right) \prod^w_{i=1} \left[ \mathbbm{1} \left[ \mathcal{C} \left( X_i \right) \right] \phi_{\text{u}} \left( X_i \right) + \mathbbm{1} \left[ \neg \mathcal{C} \left( X_i \right) \right] \phi_{\text{RDD}} \left( X_i \right) \right].
    \label{eq:recursive-decomposition:accuracy}
\end{equation}
Each random variable $X_i$ has as domain the set of sub-problems $x_i$ resulting from the decomposition of $X_0$ performed by $\mathcal{M}_{\theta}$, and we can attribute a class $c_i$ and difficulty $n_i$ to each of them. We also assume a constant width value $w$, corresponding to the number of sub-problems that a decomposition always produces.

We can identify several requirements for the following desideratum to hold:
\begin{equation}
    \phi_{\text{RDD}} \left( X_0 \right) > \phi_{\text{u}} \left( X_0 \right).
    \label{eq:recursive-decomposition:desideratum}
\end{equation}
Our desideratum states that we are more likely to arrive at a correct solution for a problem instance $x_0$ if we recursively decompose it instead of solving it directly as a unit case.

\begin{theorem}[Decomposition and merging requirement]
In order for the desideratum in \cref{eq:recursive-decomposition:desideratum} to hold, it is required that
\begin{equation}
    \phi_{\text{d}} \left( X_0 \right) \phi_{\text{m}} \left( X_0 \right) > \phi_{\text{u}} \left( X_0 \right).
    \label{eq:recursive-decomposition:condition1}
\end{equation}
\end{theorem}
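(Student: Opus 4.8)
The plan is to prove the contrapositive bound directly from the recursive accuracy formula in \cref{eq:recursive-decomposition:accuracy}. The key observation is that every factor appearing in that expression is a probability lying in $[0,1]$, so the entire product term is bounded above by $1$. First I would isolate the structure of \cref{eq:recursive-decomposition:accuracy}: it factors as $\phi_{\text{d}}(X_0)\,\phi_{\text{m}}(X_0)$ multiplied by a product over the $w$ sub-problems, where each factor in the product is a convex-type combination of $\phi_{\text{u}}(X_i)$ and $\phi_{\text{RDD}}(X_i)$ selected by the indicator $\mathbbm{1}[\mathcal{C}(X_i)]$. Since exactly one of the two indicators $\mathbbm{1}[\mathcal{C}(X_i)]$ and $\mathbbm{1}[\neg\mathcal{C}(X_i)]$ equals $1$ and the other $0$, each bracketed factor equals either $\phi_{\text{u}}(X_i)$ or $\phi_{\text{RDD}}(X_i)$, and in either case it is a value in $[0,1]$.

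The central step is then the bound
\begin{equation}
    \prod^w_{i=1} \left[ \mathbbm{1} \left[ \mathcal{C} \left( X_i \right) \right] \phi_{\text{u}} \left( X_i \right) + \mathbbm{1} \left[ \neg \mathcal{C} \left( X_i \right) \right] \phi_{\text{RDD}} \left( X_i \right) \right] \leq 1,
    \label{eq:proof:product-bound}
\end{equation}
which holds because a finite product of numbers in $[0,1]$ is itself in $[0,1]$. Combining \cref{eq:proof:product-bound} with \cref{eq:recursive-decomposition:accuracy} yields $\phi_{\text{RDD}}(X_0) \leq \phi_{\text{d}}(X_0)\,\phi_{\text{m}}(X_0)$. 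Chaining this with the desideratum \cref{eq:recursive-decomposition:desideratum}, namely $\phi_{\text{RDD}}(X_0) > \phi_{\text{u}}(X_0)$, gives $\phi_{\text{u}}(X_0) < \phi_{\text{RDD}}(X_0) \leq \phi_{\text{d}}(X_0)\,\phi_{\text{m}}(X_0)$, which is exactly the required inequality \cref{eq:recursive-decomposition:condition1}. I would present this as a short transitive chain of inequalities.

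The only subtlety, rather than a genuine obstacle, is justifying that $\phi_{\text{RDD}}(X_i) \in [0,1]$ for the recursive sub-calls so that the product-bound \cref{eq:proof:product-bound} is legitimate at every level. I would handle this by appealing to the fact that $\phi_{\text{RDD}}$ denotes an accuracy (an expected success rate) and therefore takes values in $[0,1]$ by definition, exactly as stated for $\phi_{\text{d}}$, $\phi_{\text{u}}$, and $\phi_{\text{m}}$ in \cref{sec:methodology:notation}; alternatively one can argue by induction on the recursion depth, with the base case being a unit problem where the bracketed factor reduces to $\phi_{\text{u}}(X_i) \in [0,1]$. Since the statement is a \emph{necessary} condition (the theorem claims \cref{eq:recursive-decomposition:condition1} is required for the desideratum), no reverse direction or tightness argument is needed, so the proof is essentially a one-line bounding argument once the range of each factor is established.
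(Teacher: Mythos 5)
Your proposal is correct and is essentially the paper's own argument: both rest on the single observation that the product over sub-problem factors lies in $[0,1]$, hence $\phi_{\text{RDD}}(X_0) \leq \phi_{\text{d}}(X_0)\,\phi_{\text{m}}(X_0)$, which is then combined with the desideratum. The only difference is presentational --- you state it as a direct transitive chain of inequalities, while the paper wraps the identical bound in a proof by contradiction.
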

In other words, the probability of decomposing the problem into sub-problems and then merging their sub-solutions should be greater than the probability of solving the problem directly, so that the additional non-zero probability of obtaining wrong solutions for each of the sub-problems is balanced out. In simpler terms, the tasks of decomposing and merging a problem instance $x_0$ must be easier than the task of solving $x_0$ without decomposition.

\begin{proof}
We can prove this requirement by contradiction. Let us assume that our desideratum stated in \cref{eq:recursive-decomposition:desideratum} can hold when $\phi_{\text{d}} \left( X_0 \right) \phi_{\text{m}} \left( X_0 \right) \leq \phi_{\text{u}} \left( X_0 \right)$. Without loss of generality, we can use the notation $\phi_{\text{u/RDD}} \left( X_i \right)$ to refer to the accuracies of the unit and non-unit cases for the sub-problems $X_i$ of $X_0$ indifferently, $\forall i \in [1, w]$. We can employ the definition of $\phi_{\text{RDD}}$ given in \cref{eq:recursive-decomposition:accuracy} as
\begin{equation}
    \phi_{\text{RDD}} \left( X_0 \right) = \phi_{\text{d}} \left( X_0 \right) \phi_{\text{m}} \left( X_0 \right) \phi_{\text{u/RDD}} \left( X_1 \right) \dots \phi_{\text{u/RDD}} \left( X_w \right).
\end{equation}
Since all accuracies are in the range $[0, 1]$, this leads to
\begin{equation}
    \phi_{\text{RDD}} \left( X_0 \right) \leq \phi_{\text{d}} \left( X_0 \right) \phi_{\text{m}} \left( X_0 \right).
\end{equation}
Given our initial assumption, we can conclude that
\begin{equation}
    \phi_{\text{RDD}} \left( X_0 \right) \leq \phi_{\text{u}} \left( X_0 \right),
\end{equation}
which is a contradiction, as we stated that our desideratum would hold.
\end{proof}

\begin{theorem}[Unit case requirement]
An additional requirement for the desideratum in \cref{eq:recursive-decomposition:desideratum} to hold is that
\begin{equation}
    \phi_{\text{u}} \left( X_i \right) > \phi_{\text{u}} \left( X_0 \right) , \forall i \in [1, w].
    \label{eq:recursive-decomposition:condition2}
\end{equation}
\end{theorem}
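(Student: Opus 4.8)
The plan is to prove this by contradiction, mirroring the structure used for the previous theorem. I would begin by assuming the desideratum $\phi_{\text{RDD}}(X_0) > \phi_{\text{u}}(X_0)$ holds while simultaneously supposing, for contradiction, that the condition in \cref{eq:recursive-decomposition:condition2} fails for at least one index, i.e.\ there exists some $j \in [1, w]$ with $\phi_{\text{u}}(X_j) \leq \phi_{\text{u}}(X_0)$. The goal is then to show these two suppositions cannot coexist.

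The central step reuses the product form of \cref{eq:recursive-decomposition:accuracy} together with the $\phi_{\text{u/RDD}}$ notation introduced in the preceding proof, writing $\phi_{\text{RDD}}(X_0) = \phi_{\text{d}}(X_0)\phi_{\text{m}}(X_0)\prod_{i=1}^{w}\phi_{\text{u/RDD}}(X_i)$. Because every one of these factors lies in $[0,1]$, I can isolate the $j$-th sub-problem by discarding the remaining factors, each bounded above by $1$, to obtain $\phi_{\text{RDD}}(X_0) \leq \phi_{\text{u/RDD}}(X_j)$. When $X_j$ is solved directly as a unit case, $\phi_{\text{u/RDD}}(X_j) = \phi_{\text{u}}(X_j)$, so chaining with the assumed desideratum yields $\phi_{\text{u}}(X_0) < \phi_{\text{RDD}}(X_0) \leq \phi_{\text{u}}(X_j) \leq \phi_{\text{u}}(X_0)$, a strict-versus-nonstrict contradiction that closes the argument.

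The main obstacle I anticipate is the non-unit case: if $X_j$ is itself decomposed rather than unit-solved, the isolated factor is $\phi_{\text{RDD}}(X_j)$ instead of $\phi_{\text{u}}(X_j)$, so the bound only gives $\phi_{\text{RDD}}(X_0) \leq \phi_{\text{RDD}}(X_j)$ and does not immediately touch $\phi_{\text{u}}(X_j)$. I would resolve this either by reading the universal quantifier in \cref{eq:recursive-decomposition:condition2} as ranging over the sub-problems actually solved as unit cases, or by descending the decomposition DAG recursively: applying the same factor-isolation bound at each internal node propagates the inequality down toward the leaves, which are unit cases by construction, where the clean contradiction above applies. Making the intended scope of $\forall i$ precise is the one genuinely delicate point; the arithmetic itself is routine once the product is bounded factor by factor, exactly as in the proof of the decomposition-and-merging requirement.
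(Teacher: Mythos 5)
Your setup and your treatment of the unit-solved case reproduce the paper's argument exactly: assume $\phi_{\text{u}}(X_j) \leq \phi_{\text{u}}(X_0)$ for some $j$ while the desideratum holds, expand $\phi_{\text{RDD}}(X_0)$ as the product in \cref{eq:recursive-decomposition:accuracy}, discard every factor except the $j$-th to get $\phi_{\text{RDD}}(X_0) \leq \phi_{\text{u}}(X_j) \leq \phi_{\text{u}}(X_0)$, and contradict the desideratum. That is precisely the paper's first case.

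The divergence is in the non-unit case, and your proposed repair does not close the argument as stated. If $X_j$ is further decomposed, descending the graph by repeated factor isolation terminates in a bound of the form $\phi_{\text{RDD}}(X_0) \leq \phi_{\text{u}}(X_\ell)$ for some leaf $X_\ell$ deep inside $X_j$'s subtree; but the contradiction hypothesis constrains $\phi_{\text{u}}(X_j)$, not $\phi_{\text{u}}(X_\ell)$, and nothing relates the leaf's unit accuracy to $\phi_{\text{u}}(X_0)$, so the ``clean contradiction above'' is not available at the leaf. Your alternative escape hatch --- reading $\forall i$ as quantifying only over the children that are actually unit-solved --- proves a weaker statement than the one in \cref{eq:recursive-decomposition:condition2} rather than the theorem as written. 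The paper instead keeps the inequality anchored at the offending child: it derives $\phi_{\text{RDD}}(X_0) \leq \phi_{\text{RDD}}(X_j)$ and then claims $\phi_{\text{RDD}}(X_j) \leq \phi_{\text{u}}(X_j)$ by treating $X_j$ as the root of its own decomposition sub-graph, which chains back to the hypothesis on $\phi_{\text{u}}(X_j)$ and recovers the unit-case contradiction. To be fair, the paper's justification of $\phi_{\text{RDD}}(X_j) \leq \phi_{\text{u}}(X_j)$ is itself only a sketch (it amounts to asserting that the desideratum fails at $X_j$ without establishing it), so your sense that this is the genuinely delicate step is well placed --- but any repair must keep the bound attached to $X_j$ itself rather than pushing it down to the leaves.
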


\begin{proof}
We can prove that this requirement is needed by contradiction. Assume $\phi_{\text{u}} \left( X_i \right) \leq \phi_{\text{u}} \left( X_0 \right) , \exists i \in [1, w]$, such that \cref{eq:recursive-decomposition:desideratum} holds. Let us first consider the case when $\mathcal{C}$ classifies $X_i$ as a unit problem. All other sub-problems $X_j, \text{ s.t. } j \in [1, w] \setminus \{i\}$, may be classified as either unit or non-unit problems without loss of generality. We can use \cref{eq:recursive-decomposition:accuracy} to formulate
\begin{equation}
    \phi_{\text{RDD}} \left( X_0 \right) = \phi_{\text{d}} \left( X_0 \right) \phi_{\text{m}} \left( X_0 \right) \phi_{\text{u/RDD}} \left( X_1 \right) \dots \phi_{\text{u}} \left( X_i \right) \dots \phi_{\text{u/RDD}} \left( X_w \right).
\end{equation}
By definition, all accuracies are in the range $[0, 1]$. Hence, we have that
\begin{equation}
    \phi_{\text{RDD}} \left( X_0 \right) \leq \phi_{\text{u}} \left( X_i \right).
    \label{eq:recursive-decomposition:unit-case-heuristic:statement1}
\end{equation}
Given our initial assumption, we can state that
\begin{equation}
    \phi_{\text{RDD}} \left( X_0 \right) \leq \phi_{\text{u}} \left( X_0 \right),
    \label{eq:recursive-decomposition:unit-case-heuristic:statement2}
\end{equation}
which is a contradiction, as our initial claim was that $\phi_{\text{RDD}} \left( X_0 \right) > \phi_{\text{u}} \left( X_0 \right)$. We can see that this proof can be easily extended to the case where there exists more than one sub-problem $X_i$ such that $\phi_{\text{u}} \left( X_i \right) \leq \phi_{\text{u}} \left( X_0 \right)$. Let us now consider the second case of $\mathcal{C}$ classifying $X_0$ as a non-unit problem. In this case, we can construct a similar proof for this recursive scenario. Similar to the previous case, we can use \cref{eq:recursive-decomposition:accuracy} as
\begin{equation}
    \phi_{\text{RDD}} \left( X_0 \right) = \phi_{\text{d}} \left( X_0 \right) \phi_{\text{m}} \left( X_0 \right) \phi_{\text{u/RDD}} \left( X_1 \right) \dots \phi_{\text{RDD}} \left( X_i \right) \dots \phi_{\text{u/RDD}} \left( X_w \right).
\end{equation}
Since all accuracies are in the range $[0, 1]$, we have that
\begin{equation}
    \phi_{\text{RDD}} \left( X_0 \right) \leq \phi_{\text{RDD}} \left( X_i \right).
\end{equation}
If we consider $X_i$ as the root problem of its own decomposition sub-graph, we can use the same derivation process leading to \cref{eq:recursive-decomposition:unit-case-heuristic:statement2} to state that
\begin{equation}
    \phi_{\text{RDD}} \left( X_i \right) \leq \phi_{\text{u}} \left( X_i \right),
\end{equation}
and thus
\begin{equation}
    \phi_{\text{RDD}} \left( X_0 \right) \leq \phi_{\text{u}} \left( X_i \right).
\end{equation}
We have reached the same statement described in \cref{eq:recursive-decomposition:unit-case-heuristic:statement1}, which we have already proven to lead to a contradiction. If we continue decomposing the sub-problems recursively, we can keep extending our proof until a unit case is reached (which can be guaranteed given strict termination criteria).
\end{proof}

\clearpage
\section{Scheduler algorithm}
\label{sec:appendix:scheduler}
 The procedure for the DFS scheduler is described in \cref{alg:solve-dfs}. The \textsc{Decompose} procedure corresponds to the decomposition step and the \textsc{MergeOrUnit} procedure to either the unit-solving or merging steps performed by $\mathcal{M}_\theta$.

\begin{algorithm}[H]
    \caption{\textsc{ScheduleDFS}}
    \label{alg:solve-dfs}
    \textbf{Input:} problem, visited
    \begin{algorithmic}[1]
        \If{problem $\in$ visited}
            \State \textbf{raise} a cycle error
        \EndIf
        \State visited $\gets$ visited $\cup$ problem

        \If{problem \textbf{is} decomposed}
            \State sub-problems $\gets$ problem.sub-problems
        \Else
            \State sub-problems $\gets$ \Call{Decompose}{problem}
        \EndIf
    
        \For{sub-problem $\in$ sub-problems}
            \State \Call{ScheduleDFS}{sub-problem, visited}
        \EndFor
        \State \Return \Call{MergeOrUnit}{problem}
    \end{algorithmic}
\end{algorithm}

\clearpage
\section{Prompts}
\label{sec:appendix:prompts}
\begin{listing}[H]
\caption{CoT prompt, for both the baseline and unit cases.}
\begin{lstlisting}
Your task is to solve the problem below. You can reason about the problem before stating your answer. The answer MUST be between the following tags: <ANSWER>...</ANSWER>. An example is provided to showcase how to use the tags; you must only solve the last problem given.

## Examples

{examples}

## Problem

Problem: {problem}
Answer: Let's think step by step.
\end{lstlisting}
\end{listing}

\begin{listing}[H]
\caption{LtM prompt, for both the baseline and unit cases.}
\begin{lstlisting}
Your task is to solve the problem below. You can reason about the problem before stating your answer. The answer MUST be between the following tags: <ANSWER>...</ANSWER>. An example is provided to showcase how to use the tags; you must only solve the last problem given.

## Examples

{examples}

## Problem

Problem: {problem}
Answer:
\end{lstlisting}
\end{listing}

\begin{listing}[H]
\caption{RDD prompt for the decomposition step with independent sub-problems.}
\begin{lstlisting}
You manage {width} workers. Your task is to decompose the problem below in order to delegate sub-problems to your workers. The decomposition must be complete: combining the solutions to the sub-problems must be enough to solve the original problem. You must be brief and clear. You must consider that all sub-problems must be solved independently and that merging their solutions should produce the solution to the original problem. Do not attempt to solve the sub-problems.

If the problem is simple enough to be solved by a single worker, you must only output "This is a unit problem". Otherwise, you must propose sub-problems in a bullet list. In each bullet point, provide all necessary information for a worker to solve the sub-problem. The workers will not be provided with the original problem description nor the other sub-problems. Therefore, you must include all necessary data and instructions in the description of each sub-problem. You must only use from one up to {width} of the workers, never more than {width} workers. The sub-problems you generate can be still complex; they will be decomposed again by your workers if necessary.

You can decompose the task via either the "data decomposition strategy" or the "task decomposition strategy":

- The data decomposition strategy produces sub-problems describing exactly the same data transformation given in the original problem, applied to partitions of the input data. The partitions of the input data must be of approximately equal size. The sub-problem descriptions must be exactly the same as the description of the original problem.
- The task decomposition strategy produces sub-problems describing different data transformations, applied to exactly the same input data given in the original problem. For example, the sub-problem transformations may describe sub-steps required to solve the original problem.

Examples are provided below to illustrate some decompositions; you must only provide a decomposition for the last problem.

## Examples

{examples}

## Problem

Problem: {problem}
Answer:
\end{lstlisting}
\end{listing}

\begin{listing}[H]
\caption{RDD prompt for the merging step.}
\begin{lstlisting}
The problem below was decomposed into sub-problems. The sub-problems and their sub-solutions are provided in bullet points below the problem. Your task is to solve the problem with the help of the sub-solutions. Often, obtaining the final solution to the problem only requires you to apply a transformation to the sub-solutions. If you find any mistakes in the sub-solutions, you can fix the mistakes while you merge the sub-solutions.

You must reason about how to merge the sub-solutions and solve the problem before stating your final answer. The final answer MUST be between the following tags: <ANSWER>...</ANSWER>. Some examples are provided to showcase how to use the tags and to illustrate some merging strategies; you must only solve the last problem.

## Examples

{examples}

## Problem

Problem: {problem}
{subsolutions}
Answer: 
\end{lstlisting}
\end{listing}

\begin{listing}[H]
\caption{RDD prompt for the decomposition step with possibly dependent sub-problems.}
\begin{lstlisting}
You manage {width} workers. Your task is to decompose the problem below in order to delegate sub-problems to your workers. You must only use from one up to {width} of the workers, never more than {width} workers. The decomposition must be complete: combining the solutions to the sub-problems must be enough to solve the original problem. You must be brief and clear. Do not attempt to solve the sub-problems.

If the problem is simple enough to be solved by a single worker, you must only output "This is a unit problem". Otherwise, you must propose sub-problems in a bullet list. The workers will not be provided with the original problem description nor the other sub-problem descriptions. Therefore, you must include all necessary data and instructions in the description of each sub-problem. You must never reference the original problem and you must not assume the workers can access its description and input data; instead, you must copy all relevant instructions and input data to the descriptions of sub-problems when necessary. The sub-problems you generate can be still complex; they will be decomposed again by your workers if necessary.

You can decompose the task via either the "data decomposition strategy" or the "task decomposition strategy":

- The data decomposition strategy produces sub-problems describing exactly the same data transformation given in the original problem, applied to partitions of the input data. The partitions of the input data must be of approximately equal size. The sub-problem descriptions must be exactly the same as the description of the original problem.
- The task decomposition strategy produces sub-problems describing different data transformations, applied to exactly the same input data given in the original problem. For example, the sub-problem transformations may describe sub-steps required to solve the original problem.

Each sub-problem must have a unique identifier given between square brackets before the sub-problem description. If you need to, you can also specify dependencies: within each sub-problem's description, you can refer to the solutions to other sub-problems using their identifiers between curly braces. Sub-problems cannot have the original problem as a dependency. The scheduler will substitute the identifiers of the dependencies with their solutions before sending the sub-problems to the workers. All dependencies stated between curly braces must also be sub-problems present in your bullet list.

The examples below illustrate some decompositions. You must only provide a decomposition for the last problem, do not attempt to decompose the examples.

[Continues as the prompt given in A.3.1.]
\end{lstlisting}
\end{listing}

\clearpage
\section{In-context examples}
\label{sec:appendix:examples}
\begin{listing}[H]
\caption{CoT examples for the letter concatenation task.}
\begin{lstlisting}
<INPUT>Concatenate using a space the characters at index 1 of each word in the list [Gladys, Rathav, Miya]; indices start at zero.</INPUT>
<TARGET>Let's think step by step. The characters at index 1 in the input words are "l", "a" and "i". If we concatenate these, we get the answer <ANSWER>"l a i"</ANSWER>.</TARGET>

<INPUT>Concatenate using a space the characters at index 3 of each word in the list [Gloria, Ricardo, Kanwar, Chon, Manoj, Enrique, Xiong, Shaw]; indices start at zero.</INPUT>
<TARGET>Let's think step by step. The characters at index 3 in the input words are "r", "a", "w", "n", "o", "i", "n" and "w". If we concatenate these, we get the answer <ANSWER>"r a w n o i n w"</ANSWER>.</TARGET>

<INPUT>Concatenate using a space the characters at index 0 of each word in the list [Olga, Cynthia, Gladys, Cynthia, Aliyu]; indices start at zero.</INPUT>
<TARGET>Let's think step by step. The characters at index 0 in the input words are "O", "C", "G", "C" and "A". If we concatenate these, we get the answer <ANSWER>"O C G C A"</ANSWER>.</TARGET>

<INPUT>Concatenate using a space the characters at index 3 of each word in the list [Wilson]; indices start at zero.</INPUT>
<TARGET>Let's think step by step. The characters at index 3 in the input words are "s". If we concatenate these, we get the answer <ANSWER>"s"</ANSWER>.</TARGET>

<INPUT>Concatenate using a space the characters at index 2 of each word in the list [Ilya, Jacques, Francesco, Samuel, Jadhav, Rivera, Irma, Jianping, Samuel, Christian]; indices start at zero.</INPUT>
<TARGET>Let's think step by step. The characters at index 2 in the input words are "y", "c", "a", "m", "d", "v", "m", "a", "m" and "r". If we concatenate these, we get the answer <ANSWER>"y c a m d v m a m r"</ANSWER>.</TARGET>
\end{lstlisting}
\end{listing}

\begin{listing}[H]
\caption{Generic CoT examples.}
\begin{lstlisting}
<INPUT>Who is younger: Michael Jordan, Cristiano Ronaldo or Usain Bolt?
- Sub-problem 1: How old is Cristiano Ronaldo? Sub-solution 1: 39 years old.
- Sub-problem 2: How old is Michael Jordan? Sub-solution 2: 61 years old.
- Sub-problem 3: How old is Usain Bolt? Sub-solution 3: 37 years old.</INPUT>
<TARGET>Let's think step by step. We must compare the ages of each person: (Michael Jordan, 61) > (Cristiano Ronaldo, 39) > (Usain Bolt, 37). The answer must be the person with the lowest age. Thus, the solution is <ANSWER>Usain Bolt</ANSWER></TARGET>

<INPUT>Peter had 3 apples, 7 oranges and 12 pears. He gave 1 apple to John, 4 oranges to Maria and 3 pears to Ana. How many pieces of fruit does Peter have left?</INPUT>
<TARGET>Let's think step by step. If Peter has 3 apples and gives 1 to John, he will lose 1 apple. If Peter has 7 oranges and gives 4 to Maria, he will lose 4 oranges. If Peter has 12 pears and gives 3 to Ana, he will lose 3 pears. Thus, the solution is 3 - 1 + 7 - 4 + 12 - 3 = <ANSWER>14</ANSWER></TARGET>

<INPUT>What is ((((5 + 4) * 100) + 267) / (3 * 10))?</INPUT>
<TARGET>Let's think step by step. 5 + 4 = 9. 9 * 100 = 900. 900 + 267 = 1167. 3 * 10 = 30. Thus, the answer is 1167 / 30 = <ANSWER>38.9</ANSWER></TARGET>

<INPUT>Which word in the list [hush, oceanic, surge, present, lie, wry, giraffe, dine, guide, urge, complete, tasteless, glorious, bird, raspy, murky, zoom, juice, select, liquid, hope, install, complete, aromatic, oceanic, fish, excited, fabricator, internal, dinosaurs, noiseless, partner] is longer?</INPUT>
<TARGET>Let's think step by step. The lengths of each word are (hush, 4), (oceanic, 7), (surge, 5), (present, 7), (lie, 3), (wry, 3), (giraffe, 7), (dine, 4), (guide, 5), (urge, 4), (complete, 8), (tasteless, 9), (glorious, 8), (bird, 4), (raspy, 5), (murky, 5), (zoom, 4), (juice, 5), (select, 6), (liquid, 6), (hope, 4), (install, 7), (complete, 8), (aromatic, 8), (oceanic, 7), (fish, 4), (excited, 7), (fabricator, 10), (internal, 8), (dinosaurs, 9), (noiseless, 9) and (partner, 7). Thus, the solution is <ANSWER>fabricator</ANSWER></TARGET>

<INPUT>Is the following sports-related sentence plausible? "Joao Moutinho caught the screen pass in the NFC championship."</INPUT>
<TARGET>Joao Moutinho is a soccer player. The NFC championship is part of American football, not soccer. Thus, the answer is <ANSWER>no</ANSWER></TARGET>
\end{lstlisting}
\end{listing}

\begin{listing}[H]
\caption{LtM examples for the letter concatenation task.}
\begin{lstlisting}
<INPUT>Concatenate using a space the characters at index 1 of each word in the list [Gladys, Rathav, Miya]; indices start at zero.</INPUT>
<TARGET>The letters at index 1 of "Gladys" and "Rathav" are "l" and "a". Concatenating "l" and "a" leads to "l a". The letter at index 1 of "Miya" is "i". Concatenating "l a" and "i" leads to <ANSWER>"l a i"</ANSWER>.</TARGET>

<INPUT>Concatenate using a space the characters at index 3 of each word in the list [Gloria, Ricardo, Kanwar, Chon, Manoj, Enrique, Xiong, Shaw]; indices start at zero.</INPUT>
<TARGET>The letters at index 3 of "Gloria" and "Ricardo" are "r" and "a". Concatenating "r" and "a" leads to "r a". The letter at index 3 of "Kanwar" is "w". Concatenating "r a" and "w" leads to "r a w". The letter at index 3 of "Chon" is "n". Concatenating "r a w" and "n" leads to "r a w n". The letter at index 3 of "Manoj" is "o". Concatenating "r a w n" and "o" leads to "r a w n o". The letter at index 3 of "Enrique" is "i". Concatenating "r a w n o" and "i" leads to "r a w n o i". The letter at index 3 of "Xiong" is "n". Concatenating "r a w n o i" and "n" leads to "r a w n o i n". The letter at index 3 of "Shaw" is "w". Concatenating "r a w n o i n" and "w" leads to <ANSWER>"r a w n o i n w"</ANSWER>.</TARGET>

<INPUT>Concatenate using a space the characters at index 0 of each word in the list [Olga, Cynthia, Gladys, Cynthia, Aliyu]; indices start at zero.</INPUT>
<TARGET>The letters at index 0 of "Olga" and "Cynthia" are "O" and "C". Concatenating "O" and "C" leads to "O C". The letter at index 0 of "Gladys" is "G". Concatenating "O C" and "G" leads to "O C G". The letter at index 0 of "Cynthia" is "C". Concatenating "O C G" and "C" leads to "O C G C". The letter at index 0 of "Aliyu" is "A". Concatenating "O C G C" and "A" leads to <ANSWER>"O C G C A"</ANSWER>.</TARGET>

<INPUT>Concatenate using a space the characters at index 3 of each word in the list [Wilson]; indices start at zero.</INPUT>
<TARGET>The letter at index 3 of "Wilson" is <ANSWER>"s"</ANSWER>.</TARGET>

<INPUT>Concatenate using a space the characters at index 2 of each word in the list [Ilya, Jacques, Francesco, Samuel, Jadhav, Rivera, Irma, Jianping, Samuel, Christian]; indices start at zero.</INPUT>
<TARGET>The letters at index 2 of "Ilya" and "Jacques" are "y" and "c". Concatenating "y" and "c" leads to "y c". The letter at index 2 of "Francesco" is "a". Concatenating "y c" and "a" leads to "y c a". The letter at index 2 of "Samuel" is "m". Concatenating "y c a" and "m" leads to "y c a m". The letter at index 2 of "Jadhav" is "d". Concatenating "y c a m" and "d" leads to "y c a m d". The letter at index 2 of "Rivera" is "v". Concatenating "y c a m d" and "v" leads to "y c a m d v". The letter at index 2 of "Irma" is "m". Concatenating "y c a m d v" and "m" leads to "y c a m d v m". The letter at index 2 of "Jianping" is "a". Concatenating "y c a m d v m" and "a" leads to "y c a m d v m a". The letter at index 2 of "Samuel" is "m". Concatenating "y c a m d v m a" and "m" leads to "y c a m d v m a m". The letter at index 2 of "Christian" is "r". Concatenating "y c a m d v m a m" and "r" leads to <ANSWER>"y c a m d v m a m r"</ANSWER>.</TARGET>
\end{lstlisting}
\end{listing}

\begin{listing}[H]
\caption{RDD examples for the decomposition step and the letter concatenation task.}
\begin{lstlisting}
<INPUT>Concatenate using a space the characters at index 1 of each word in the list [Dong]; indices start at zero.</INPUT>
<TARGET>This is a unit problem.</TARGET>

<INPUT>Concatenate using a space the characters at index 2 of each word in the list [Shimizu, Hoang, Muhammad, Mejia, Fernandes, Punam, Cesar]; indices start at zero.</INPUT>
<TARGET>- Concatenate using a space the characters at index 2 of each word in the list [Shimizu, Hoang, Muhammad, Mejia]; indices start at zero.
- Concatenate using a space the characters at index 2 of each word in the list [Fernandes, Punam, Cesar]; indices start at zero.</TARGET>

<INPUT>Concatenate using a space the characters at index 2 of each word in the list [Lawal, Jadhav, Sekha, Jadhav, Abraham, Sushila, Hoang, Gerhard, Heinz]; indices start at zero.</INPUT>
<TARGET>- Concatenate using a space the characters at index 2 of each word in the list [Lawal, Jadhav, Sekha, Jadhav]; indices start at zero.
- Concatenate using a space the characters at index 2 of each word in the list [Abraham, Sushila, Hoang, Gerhard]; indices start at zero.
- Concatenate using a space the characters at index 2 of each word in the list [Heinz]; indices start at zero.</TARGET>

<INPUT>Concatenate using a space the characters at index 2 of each word in the list [Kailash, Ouattara, Kasongo, Perez, Jyoti]; indices start at zero.</INPUT>
<TARGET>This is a unit problem.</TARGET>

<INPUT>Concatenate using a space the characters at index 0 of each word in the list [Guan, Madina, Mejia, Herrera, Christopher, Sergey, Karina, Lucy, Ortega, Vera, Mallik, Weimin, Kwon, Zhan, Shaw, Tahir, Chang, Halyna, Weidong, Ochoa, Dung, George, Nayak, Jianming, Paola, Awad, Nabil, Garba, Amal, Sergey, Mustapha, Garcia, Bello, Sergey, Otieno, Rojas, Andrew, Mustafa, Haji, Philip, Leticia, Syed, Blanca, Mahendra, Salim, Ghulam, Quan, Yanhua, Artyom, Muhammad]; indices start at zero.</INPUT>
<TARGET>- Concatenate using a space the characters at index 0 of each word in the list [Guan, Madina, Mejia, Herrera, Christopher, Sergey, Karina, Lucy, Ortega, Vera, Mallik, Weimin]; indices start at zero.
- Concatenate using a space the characters at index 0 of each word in the list [Kwon, Zhan, Shaw, Tahir, Chang, Halyna, Weidong, Ochoa, Dung, George, Nayak, Jianming]; indices start at zero.
- Concatenate using a space the characters at index 0 of each word in the list [Paola, Awad, Nabil, Garba, Amal, Sergey, Mustapha, Garcia, Bello, Sergey, Otieno, Rojas]; indices start at zero.
- Concatenate using a space the characters at index 0 of each word in the list [Andrew, Mustafa, Haji, Philip, Leticia, Syed, Blanca, Mahendra, Salim, Ghulam, Quan, Yanhua, Artyom, Muhammad]; indices start at zero.</TARGET>
\end{lstlisting}
\end{listing}

\begin{listing}[H]
\caption{Generic RDD examples for the decomposition step.}
\begin{lstlisting}
<INPUT>If Peter has 3 apples and gives 1 to John, how many apples does Peter have left?</INPUT>
<TARGET>This problem is simple enough to be solved directly by a single mathematical operation. <ANSWER>This is a unit problem.</ANSWER></TARGET>

<INPUT>What is ((((5 + 4) * 100) + 267) / (3 * 10))?</INPUT>
<TARGET>We can use the data decomposition strategy here by splitting the input formula into sub-formulas. We can use two workers. The merged solution will be $sub_solution_1 / sub_solution_2$.
<ANSWER>- What is ((5 + 4) * 100) + 267?
- What is 3 * 10?</ANSWER></TARGET>

<INPUT>What is the result of log_2(16)?</INPUT>
<TARGET>This problem is simple enough to be solved directly by a single mathematical operation. <ANSWER>This is a unit problem.</ANSWER></TARGET>

<INPUT>Write the blueprint for a webpage view using the Vue3 framework about a study on salaries based on profession and age. The view must contain an initial text description of the study, a table with headers "Name", "Age", "Profession" and "Salary", as well as a picture slider. The data for the table will be available from a local JSON file, and the pictures for the slider will also be available locally.</INPUT>
<TARGET>We can use the task decomposition strategy here by splitting the task into smaller independent tasks, consisting on creating Vue3 components for each element of the view. We can use two workers. The merged solution will be the code for the components generated when solving the sub-problems, as well as code for the view using such components.
<ANSWER>- Write code using the Vue3 framework for a component representing a table with headers "Name", "Age", "Profession" and "Salary". The data for the table will be available from a local JSON file.
- Write code using the Vue3 framework for a component representing a picture slider. The pictures for the slider will be available locally.</ANSWER></TARGET>

<INPUT>Write a Python function that takes the base and height of a triangle (two floating point numbers) and returns its area (also a floating point number).</INPUT>
<TARGET>This problem is simple enough to be solved directly by writing a short Python function. <ANSWER>This is a unit problem.</ANSWER></TARGET>

<INPUT>Which word in the list [hush, oceanic, surge, present, lie, wry, giraffe, dine, guide, urge, complete, tasteless, glorious, bird, raspy, murky, zoom, juice, select, liquid, hope, install, complete, aromatic, oceanic, fish, excited, tail, internal, dinosaurs, noiseless, partner] is longer? If there is more than one word with the same length, any of them is a valid answer.</INPUT>
<TARGET>We can use the data decomposition strategy here by splitting the input list of words into smaller lists. We can use three workers. Each of the list partitions will be approximately the same size. The merged solution will be the longest word out of all the sub-solutions.
\end{lstlisting}
\end{listing}

\begin{listing}[H]
\caption{Generic RDD examples for the decomposition step (continued).}
\begin{lstlisting}
<ANSWER>- Which word in the list [hush, oceanic, surge, present, lie, wry, giraffe, dine, guide, urge, complete] is longer? If there is more than one word with the same length, any of them is a valid answer.
- Which word in the list [tasteless, glorious, bird, raspy, murky, zoom, juice, select, liquid, hope, install] is longer? If there is more than one word with the same length, any of them is a valid answer.
- Which word in the list [complete, aromatic, oceanic, fish, excited, tail, internal, dinosaurs, noiseless, partner] is longer? If there is more than one word with the same length, any of them is a valid answer.</ANSWER></TARGET>

<INPUT>Which word in the list [cow, banana, ensemble, castle, wise] is longer?</INPUT>
<TARGET>This problem is simple enough to be solved directly by performing a length comparison of only two words. <ANSWER>This is a unit problem.</ANSWER></TARGET>

<INPUT>Is the following sports-related sentence plausible? "Joao Moutinho caught the screen pass in the NFC championship."</INPUT>
<TARGET>We can use the task decomposition strategy here by proposing questions to gather the information required to solve the original problem. We can use two workers. The merged solution will be "yes" if the resulting information of both Joao Moutinho and the NFC championship match, and "no" otherwise.
<ANSWER>- Which sport does Joao Moutinho play?
- To which sport does the NFC championship belong to?</ANSWER></TARGET>
\end{lstlisting}
\end{listing}

\begin{listing}[H]
\caption{RDD examples for the merging step and the letter concatenation task.}
\begin{lstlisting}
<INPUT>Concatenate using a space the characters at index 1 of each word in the list [Orlando, Arif, Keith, Lyudmyla, Amin, Theresa, Stefan, Gilberto, Samina, Yoko, Katarzyna, Haiying, Saraswati, Theresa, Bernadette, Maung, Lopez, Pereira, Shaikh, Brown, Ortiz]; indices start at zero.
- Sub-problem 1: Concatenate using a space the characters at index 1 of each word in the list [Orlando, Arif, Keith, Lyudmyla, Amin]; indices start at zero. Sub-solution 1: "r r e y m".
- Sub-problem 2: Concatenate using a space the characters at index 1 of each word in the list [Theresa, Stefan, Gilberto, Samina, Yoko]; indices start at zero. Sub-solution 2: "h t i a o".
- Sub-problem 3: Concatenate using a space the characters at index 1 of each word in the list [Katarzyna, Haiying, Saraswati, Theresa, Bernadette]; indices start at zero. Sub-solution 3: "a a a h e".
- Sub-problem 4: Concatenate using a space the characters at index 1 of each word in the list [Maung, Lopez, Pereira, Shaikh, Brown, Ortiz]; indices start at zero. Sub-solution 4: "a o e h r r".</INPUT>
<TARGET>"r r e y m h t i a o a a a h e a o e h r r"</TARGET>

<INPUT>Concatenate using a space the characters at index 2 of each word in the list [Lawal, Jadhav, Sekha, Jadhav, Abraham, Sushila, Hoang, Gerhard, Heinz]; indices start at zero.
- Sub-problem 1: Concatenate using a space the characters at index 2 of each word in the list [Lawal, Jadhav, Sekha, Jadhav]; indices start at zero. Sub-solution 1: "w d k d".
- Sub-problem 2: Concatenate using a space the characters at index 2 of each word in the list [Abraham, Sushila, Hoang, Gerhard]; indices start at zero. Sub-solution 2: "r s a r".
- Sub-problem 3: Concatenate using a space the characters at index 2 of each word in the list [Heinz]; indices start at zero. Sub-solution 3: "i".</INPUT>
<TARGET>"w d k d r s a r i"</TARGET>

<INPUT>Concatenate using a space the characters at index 1 of each word in the list [Prem, Wilson, Ashraf, Gilberto, Shobha]; indices start at zero.
- Sub-problem 1: Concatenate using a space the characters at index 1 of each word in the list [Prem, Wilson, Ashraf]; indices start at zero. Sub-solution 1: "r i s".
- Sub-problem 2: Concatenate using a space the characters at index 1 of each word in the list [Gilberto, Shobha]; indices start at zero. Sub-solution 2: "i h".</INPUT>
<TARGET>"r i s i h"</TARGET>
\end{lstlisting}
\end{listing}

\begin{listing}[H]
\caption{RDD examples for the merging step and the letter concatenation task (continued).}
\begin{lstlisting}
<INPUT>Concatenate using a space the characters at index 1 of each word in the list [Robin, Mostafa, Hadi, Gutierrez, Farooq, Nicolas, Alicia, Sandra, Xiaolin, Valerie]; indices start at zero.
- Sub-problem 1: Concatenate using a space the characters at index 1 of each word in the list [Robin, Mostafa, Hadi]; indices start at zero. Sub-solution 1: "o o a".
- Sub-problem 2: Concatenate using a space the characters at index 1 of each word in the list [Gutierrez, Farooq, Nicolas]; indices start at zero. Sub-solution 2: "u a i".
- Sub-problem 3: Concatenate using a space the characters at index 1 of each word in the list [Alicia, Sandra, Xiaolin, Valerie]; indices start at zero. Sub-solution 3: "l a i a".</INPUT>
<TARGET>"o o a u a i l a i a"</TARGET>

<INPUT>Concatenate using a space the characters at index 1 of each word in the list [Cheng, Jianwei, Magdalena, Raimundo, Rosario, Raju, Orlando]; indices start at zero.
- Sub-problem 1: Concatenate using a space the characters at index 1 of each word in the list [Cheng, Jianwei, Magdalena]; indices start at zero. Sub-solution 1: "h i a".
- Sub-problem 2: Concatenate using a space the characters at index 1 of each word in the list [Raimundo, Rosario, Raju]; indices start at zero. Sub-solution 2: "a o a".
- Sub-problem 3: Concatenate using a space the characters at index 1 of each word in the list [Orlando]; indices start at zero. Sub-solution 3: "r".</INPUT>
<TARGET>"h i a a o a r"</TARGET>
\end{lstlisting}
\end{listing}

\begin{listing}[H]
\caption{Generic RDD examples for the merging step.}
\begin{lstlisting}
<INPUT>Who is younger: Michael Jordan, Cristiano Ronaldo or Usain Bolt?
- Sub-problem 1: How old is Cristiano Ronaldo? Sub-solution 1: 39 years old.
- Sub-problem 2: How old is Michael Jordan? Sub-solution 2: 61 years old.
- Sub-problem 3: How old is Usain Bolt? Sub-solution 3: 37 years old.</INPUT>
<TARGET>We can obtain the solution to the original problem by comparing the ages given in the sub-solutions. Thus, the solution is <ANSWER>Usain Bolt</ANSWER></TARGET>

<INPUT>Peter had 3 apples, 7 oranges and 12 pears. He gave 1 apple to John, 4 oranges to Maria and 3 pears to Ana. How many pieces of fruit does Peter have left?
- Sub-problem 1: Peter had 3 apples and gave 1 to John. How many apples does Peter have left? Sub-solution 1: 2.
- Sub-problem 2: Peter had 7 oranges and gave 4 to Maria. How many oranges does Peter have left? Sub-solution 2: 3.
- Sub-problem 3: Peter had 12 pears and gave 3 to Ana. How many pears does Peter have left? Sub-solution 3: 9.</INPUT>
<TARGET>We can obtain the solution to the original problem by adding up the pieces Peter has left for each type of fruit. These pieces are given by each sub-solution. Thus, the solution is 2 + 3 + 9 = <ANSWER>14</ANSWER></TARGET>

<INPUT>What is ((((5 + 4) * 100) + 267) / (3 * 10))?
- Sub-problem 1: What is ((5 + 4) * 100) + 267? Sub-solution 1: 1167.
- Sub-problem 2: What is 3 * 10? Sub-solution 2: 30.</INPUT>
<TARGET>We can obtain the solution to the original problem by performing the operation $sub_solution_1 / sub_solution_2$. Thus, the solution is 1167 / 30 = <ANSWER>38.9</ANSWER></TARGET>

<INPUT>Which word in the list [hush, oceanic, surge, present, lie, wry, giraffe, dine, guide, urge, complete, tasteless, glorious, bird, raspy, murky, zoom, juice, select, liquid, hope, install, complete, aromatic, oceanic, fish, excited, fabricator, internal, dinosaurs, noiseless, partner] is longer?
- Sub-problem 1: Which word in the list [hush, oceanic, surge, present, lie, wry, giraffe, dine, guide, urge, complete] is longer? Sub-solution 1: complete.
- Sub-problem 2: Which word in the list [tasteless, glorious, bird, raspy, murky, zoom, juice, select, liquid, hope, install] is longer? Sub-solution 2: tasteless.
- Sub-problem 3: Which word in the list [complete, aromatic, oceanic, fish, excited, fabricator, internal, dinosaurs, noiseless, partner] is longer? Sub-solution 3: fabricator.</INPUT>
<TARGET>We can obtain the solution to the original problem by taking the longest word out of the three sub-solutions. "complete" has 8 letters, "tasteless" has 9 letters and "fabricator" has 10 letters. Thus, the solution is <ANSWER>fabricator</ANSWER></TARGET>

<INPUT>Is the following sports-related sentence plausible? "Joao Moutinho caught the screen pass in the NFC championship."
- Sub-problem 1: Which sport does Joao Moutinho play? Sub-solution 1: Soccer.
- Sub-problem 2: To which sport does the NFC championship belong to? Sub-solution 2: American football.</INPUT>
<TARGET>Joao Moutinho does not play the same sport that the NFC championship belongs to. Thus, the answer is <ANSWER>no</ANSWER></TARGET>
\end{lstlisting}
\end{listing}

\begin{listing}[H]
\caption{Generic RDD examples for the decomposition step and the length reversal task.}
\begin{lstlisting}
<INPUT>If Peter has 3 apples and gives 1 to John, how many apples does Peter have left?</INPUT>
<TARGET>This problem is simple enough to be solved directly by a single mathematical operation. <ANSWER>This is a unit problem.</ANSWER></TARGET>

<INPUT>Who is the brother of the Sultan of Brunei married to?</INPUT>
<TARGET>We can use the task decomposition strategy here by splitting the task into smaller tasks, in order to find out the necessary information to answer the main question. We can use two workers. The merged solution will be attained by using the intermediate information to solve the original question.
<ANSWER>- [P-1] Who is the Sultan of Brunei?
- [P-2] Who is the brother of {P-1}?
- [P-3] Who is married to {P-2}?</ANSWER></TARGET>

<INPUT>Solve for $y$: $\frac{4}{\log_2 8}x = \log_2 16 + 7$, $y = 3x$.</INPUT>
<TARGET>We can use the task decomposition strategy here by splitting the task into simpler mathematical operations. We can use three workers. The merged solution will be attained by using the intermediate results to obtain a value for $y$.
<ANSWER>- [P-1] What is the result of $\frac{4}{\log_2 8}$?
- [P-2] What is the result of $\log_2 16 + 7$?
- [P-3] What is result of $\frac{{P-2}}{{P-1}}$?</ANSWER></TARGET>

<INPUT>Write the blueprint for a webpage view using the Vue3 framework about a study on salaries based on profession and age. The view must contain an initial text description of the study, a table with headers "Name", "Age", "Profession" and "Salary", as well as a picture slider. The data for the table will be available from a local JSON file, and the pictures for the slider will also be available locally.</INPUT>
<TARGET>We can use the task decomposition strategy here by splitting the task into smaller tasks, consisting on creating Vue3 components for each element of the view. We can use two workers. The merged solution will be the code for the components generated when solving the sub-problems, as well as code for the view using such components.
<ANSWER>- [P-1] Write code using the Vue3 framework for a component representing a table with headers "Name", "Age", "Profession" and "Salary". The data for the table will be available from a local JSON file.
- [P-2] Write code using the Vue3 framework for a component representing a picture slider. The pictures for the slider will be available locally.</ANSWER></TARGET>

<INPUT>Write a Python function that takes the base and height of a triangle (two floating point numbers) and returns its area (also a floating point number).</INPUT>
<TARGET>This problem is simple enough to be solved directly by writing a short Python function. <ANSWER>This is a unit problem.</ANSWER></TARGET>
\end{lstlisting}
\end{listing}

\begin{listing}[H]
\caption{Generic RDD examples for the decomposition step and the length reversal task (continued).}
\begin{lstlisting}
<INPUT>Which is the oldest country out of Germany, Japan, Switzerland, Spain, Bolivia, Angola, Laos, Belgium, Canada, Mexico, Costa Rica, Indonesia, Pakistan and Rwanda?</INPUT>
<TARGET>We can use the data decomposition strategy here by splitting the input data into evenly sized partitions and solving the same problem for each partition. We can use two workers. The merged solution will be oldest country out of all the sub-solutions.
<ANSWER>- [P-1] Which is the oldest country out of Germany, Japan, Switzerland, Spain, Bolivia, Angola and Laos?
- [P-2] Which is the oldest country out of Belgium, Canada, Mexico, Costa Rica, Indonesia, Pakistan and Rwanda?</ANSWER></TARGET>

<INPUT>Which is the oldest country out of Germany, Japan, Switzerland, Spain, Bolivia, Angola and Laos?</INPUT>
<TARGET>We can use the task decomposition strategy here by performing different steps to obtain all required information to answer the question. We can use two workers. The merged solution will be the longest word out of all the sub-solutions.
<ANSWER>- [P-1] Create a list of country-age pairs for each country and their respective ages out of Germany, Japan, Switzerland, Spain, Bolivia, Angola and Laos.
- [P-2] Which is the country with the largest age, given the following list of country-age pairs: {P-1}?</ANSWER></TARGET>

<INPUT>Which word in the list [cow, banana, ensemble, castle, wise] is longer?</INPUT>
<TARGET>This problem is simple enough to be solved directly by performing a length comparison of only five words. <ANSWER>This is a unit problem.</ANSWER></TARGET>
\end{lstlisting}
\end{listing}

\clearpage
\section{Resource usage statistics}
\label{sec:appendix:resource-usage}
We attempted to match the estimated resource usage of the baselines and our method by the amount of Self-Consistency (SC) \citep{wangSelfConsistencyImprovesChain2022} samples. We used the following formula for resource matching:  $\text{n\_context\_tokens} + 3 \cdot \text{n\_output\_tokens}$.

\begin{table}[h]
    \centering
    \begin{tabular}{p{0.75cm}p{2cm}p{1.25cm}p{1.25cm}p{2.7cm}p{2.7cm}}
        \toprule
        \textbf{$n_0$}&\textbf{Method}&\textbf{Time}&\textbf{Calls}&\textbf{Context tokens}&\textbf{Output tokens}\\
        \midrule
        5 &CoT+SC&2.75h &2,500 &1,249,529 &84,656\\
          &LtM+SC&3.78h &1,100 &1,141,800 &122,996\\
          &RDD+LtM&0.53h &506   &466,812   &14,572\\
        \midrule
        10&CoT+SC&3.80h &2,500 &1,332,860 &120,837\\
          &LtM+SC&8.87h &1,100 &1,373,365 &295,972\\
          &RDD+LtM&1.20h &880   &817,621   &35,440\\
        \midrule
        20&CoT+SC&5.82h &2,500 &1,497,807 &188,666\\
          &LtM+SC&11.15h&500   &889,367   &378,866\\
          &RDD+LtM&2.50h &1,541 &1,416,940 &75,390\\
        \midrule
        50&CoT+SC&13.18h&2,700 &2,166,008 &437,959\\
          &LtM+SC&17.78h* &700 &1,893,349 &899,997\\
          &RDD+LtM&4.85h &2,022 &2,712,931 &250,710\\
        \midrule
        70&CoT+SC&12,18h* &2,700 &2,527,744 &653,758 \\
          &LtM+SC&66.94h* &509 &754,192 &3,098,360\\
          &RDD+LtM&7.10h &924   &1,289,806 &389,013\\
        \midrule
        90&CoT+SC&25.57h* &2,700 &3,536,975 &1,372,692 \\
          &LtM+SC&310.12h* &421 &686,331 &3,121,383\\
          &RDD+LtM&10.53h &974  &1,396,950 &570,232\\
        \bottomrule \\
    \end{tabular}
    \caption{Resource usage for the letter concatenation benchmark with task-specific examples. Experiments were run with NVIDIA A100 GPUs; those experiments marked with an asterisk were run with NVIDIA H100 GPUs instead.}
    \label{tab:resource-usage:letterconcat-specific}
\end{table}

\begin{table}[h]
    \centering
    \begin{tabular}{p{0.75cm}p{2cm}p{1.25cm}p{1.25cm}p{2.7cm}p{2.7cm}}
        \toprule
        \textbf{$n_0$}&\textbf{Method}&\textbf{Time}&\textbf{Calls}&\textbf{Context tokens}&\textbf{Output tokens}\\
        \midrule
        5 &CoT+SC&2.75h &2,500 &1,903,006 &135,905\\
          &RDD+CoT&0.57h &400   &436,868   &26,380\\
        \midrule
        10&CoT+SC&3.93h &2,500 &2,024,457 &200,147\\
          &RDD+CoT&0.63h &400   &442,718   &31,960\\
        \midrule
        20&CoT+SC&6.38h &2,500 &2,272,076 &331,999\\
          &RDD+CoT&1.83h &1,000 &1,141,523 &90,086\\
        \midrule
        50&CoT+SC&15.03h&2,700 &3,301,632 &810,976\\
          &RDD+CoT&3.48h &1,700 &1,915,870 &171,422\\
        \midrule
        70&CoT+SC&15.09h&2,200 &3,030,226 &843,774\\
          &RDD+CoT&5.32h &2,600 &2,854,574 &241,592\\
        \midrule
        90&CoT+SC&21.30h&2,200 &3,550,609 &1,128,185\\
          &RDD+CoT&5.77h &2,562 &2,883,035 &290,143\\
        \bottomrule \\
    \end{tabular}
    \caption{Resource usage for the letter concatenation benchmark with generic examples.}
    \label{tab:resource-usage:letterconcat-generic}
\end{table}

\begin{table}[h]
    \centering
    \begin{tabular}{p{0.75cm}p{2cm}p{1.25cm}p{1.25cm}p{2.7cm}p{2.7cm}}
        \toprule
        \textbf{$n_0$}&\textbf{Method}&\textbf{Time}&\textbf{Calls}&\textbf{Context tokens}&\textbf{Output tokens}\\
        \midrule
        3 &CoT+SC&1.90h &1,500 &1,238,293 &97,128 \\
          &RDD+CoT&1.28h &1,012 &1,095,336 &62,267 \\
        \midrule
        5&CoT+SC&2.49h &1,500 &1,641,229 &127,839\\
          &RDD+CoT&1.42h &900   &1,003,953 &72,215 \\
        \midrule
        7&CoT+SC&3.13h &1,500 &1,332,821 &164,456\\
          &RDD+CoT&1.70h &902   &1,015,943 &86,072 \\
        \midrule
        10&CoT+SC&3.72h &1,500 &1,742,526 &194,890\\
          &RDD+CoT&2.05h &900   &1,028,783 &103,765\\
        \midrule
        15&CoT+SC&6.02h &1,500 &1,545,019 &319,041\\
          &RDD+CoT&2.60h &900   &1,053,831 &135,297\\
        \midrule
        20&CoT+SC&6.28h &1,500 &1,945,648 &335,419\\
          &RDD+CoT&3.03h &900   &1,079,213 &159,455\\
        \bottomrule \\
    \end{tabular}
    \caption{Resource usage for the length reversal benchmark with generic examples and RDD.}
    \label{tab:resource-usage:lengthrev-deps-generic}
\end{table}

\clearpage
\section{Error analysis data}
\label{sec:appendix:error-analysis-data}
\begin{table}[ht]
    \centering
    \captionsetup[subfigure]{justification=centering}
    \begin{subfigure}[t]{0.49\linewidth}
        \centering
        \begin{tabular}{p{0.6cm}p{0.8cm}p{0.8cm}p{0.8cm}p{0.8cm}}
            \toprule
            $n_0$&$\phi_{\text{d}}$&$\phi_{\text{m}}$&$\phi_{\text{u}}$&$\phi_{\text{RDD}}$\\
            \midrule
            5  & 1.00 & 0.99 & 0.99 & 0.98 \\
            10 & 1.00 & 0.98 & 0.97 & 0.91 \\
            20 & 1.00 & 0.97 & 0.98 & 0.85 \\
            50 & 1.00 & 0.97 & 0.98 & 0.80 \\
            70 & 1.00 & 0.96 & 0.98 & 0.84 \\
            90 & 0.94 & 0.94 & 0.87 & 0.45 \\
            \bottomrule
        \end{tabular}
        \caption{Error analysis data for the task-specific in-context setting.}
        \label{tab:evaluation:errors:letterconcat-specific}
    \end{subfigure}
    \hfill
    \begin{subfigure}[t]{0.49\linewidth}
        \centering
        \begin{tabular}{p{0.6cm}p{0.8cm}p{0.8cm}p{0.8cm}p{0.8cm}}
            \toprule
            $n_0$&$\phi_{\text{d}}$&$\phi_{\text{m}}$&$\phi_{\text{u}}$&$\phi_{\text{RDD}}$\\
            \midrule
            5  & 1.00 & 0.96 & 0.97 & 0.93 \\
            10 & 1.00 & 0.99 & 0.93 & 0.85 \\
            20 & 1.00 & 0.96 & 0.92 & 0.71 \\
            50 & 1.00 & 0.93 & 0.92 & 0.42 \\
            70 & 1.00 & 0.85 & 0.93 & 0.28 \\
            90 & 1.00 & 0.81 & 0.90 & 0.11 \\
            \bottomrule
        \end{tabular}
        \caption{Error analysis data for the generic in-context setting.}
        \label{tab:evaluation:errors:letterconcat-generic}
    \end{subfigure}

    \caption{A complete data account for the analysis provided in \cref{sec:evaluation:error-analysis}. The data for the task-specific in-context experiment is given in \textbf{(a)} and the one for the generic in-context experiment in \textbf{(b)}.}
    \label{tab:evaluation:errors}
\end{table}

\clearpage
\section{Example of error propagation}
\label{sec:appendix:error-propagation}
\begin{figure}[ht]
    \centering
    \includegraphics[width=1.0\linewidth,keepaspectratio]{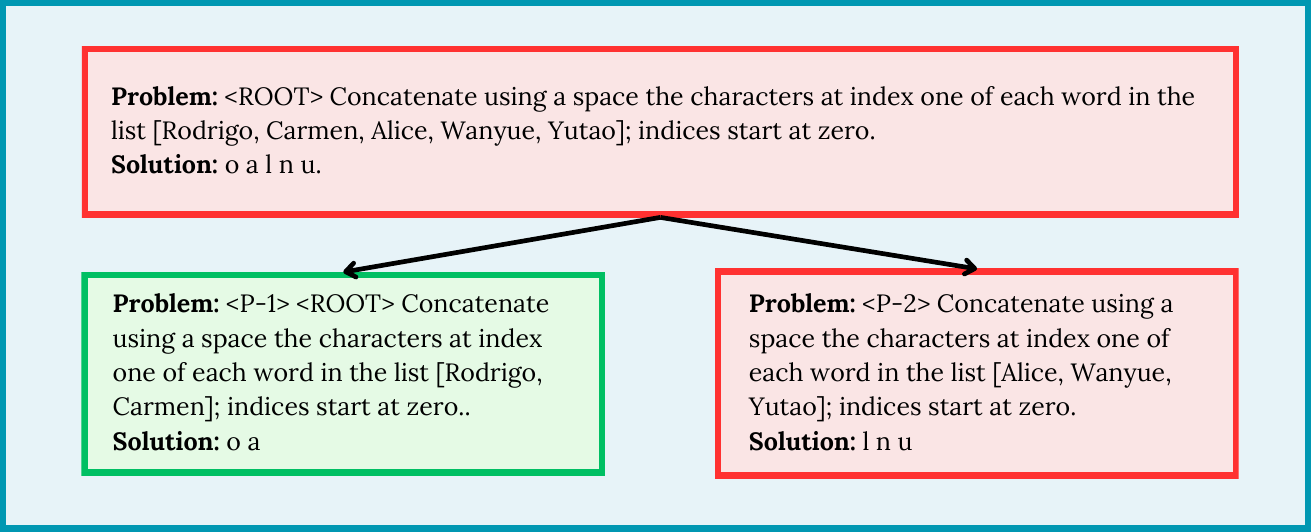} 
    \caption{Example of error propagation behavior during the execution of RDD. Green nodes correspond to correctly solved problems and red nodes to incorrectly solved problems. The method performs a mistake when unit-solving \textit{P-2}, which is carried over to the solution of the root problem.}
    \label{fig:error-propagation}
\end{figure}

\clearpage
\section{Example of error recovery}
\label{sec:appendix:error-recovery}
\begin{figure}[ht]
    \centering
    \includegraphics[width=1.0\linewidth,keepaspectratio]{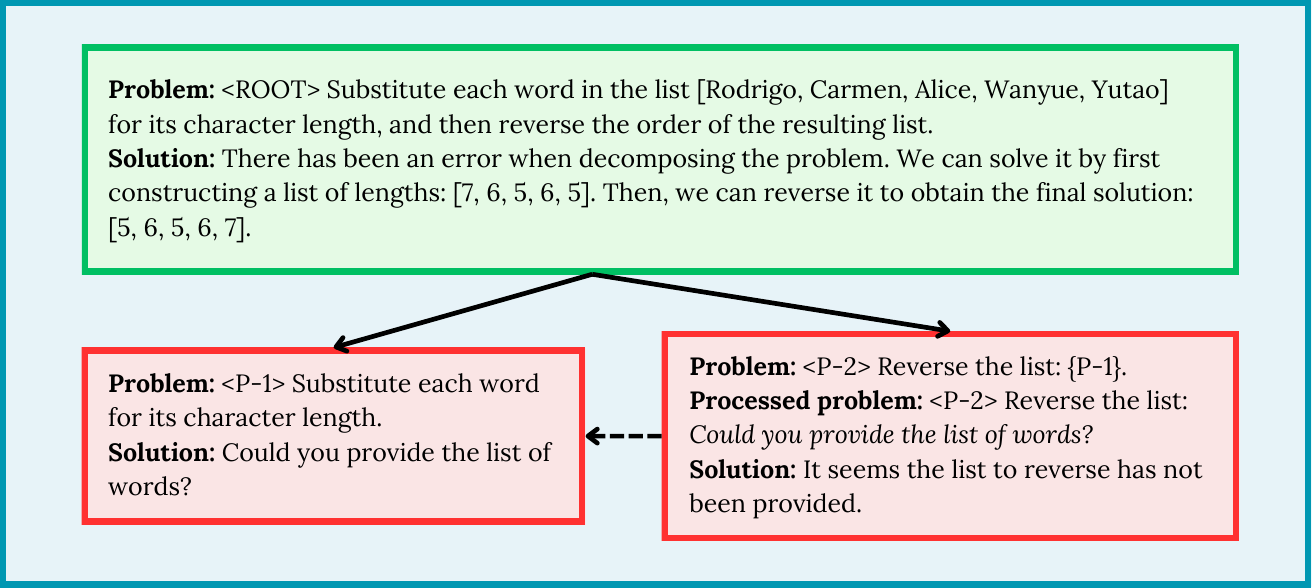}
    \caption{Example of error recovery behavior during the execution of RDD. Green nodes correspond to correctly solved problems and red nodes to incorrectly solved problems. The method does not perform the decomposition step correctly as \textit{P-1} is formulated with missing data. This issue is carried over to \textit{P-2}, but the merge step in the root problem recovers from this error.}
    \label{fig:error-recovery}
\end{figure}

\clearpage
\section{Example execution of the RDD method}
\label{sec:appendix:example-run}
An example execution of the \textsc{ScheduleBFS} procedure (\cref{alg:solve-bfs}). The execution can be followed from top to bottom. On the top-left edge of each image, we provide the type of meta-task that is performed in each step, as well as the node to which it is applied. Orange borders express a decomposition transformation of the node or an embedding of the solutions of dependencies. Green borders represent the solving process of the node, either via unit-solving or merging.

\begin{figure}[H]
    \centering
    \includegraphics[width=0.97\linewidth,keepaspectratio]{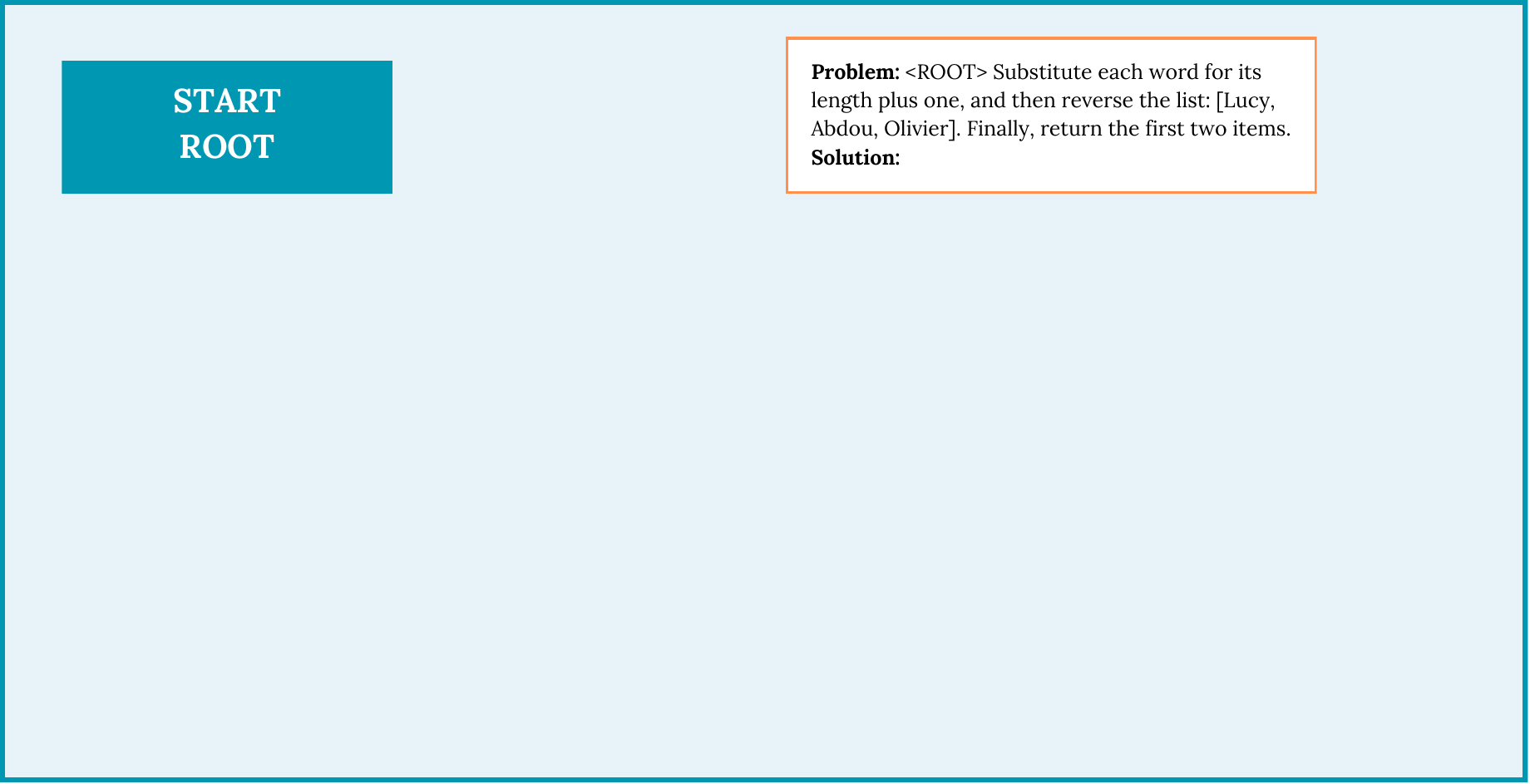}
\end{figure}

\begin{figure}[H]
    \centering
    \includegraphics[width=0.97\linewidth,keepaspectratio]{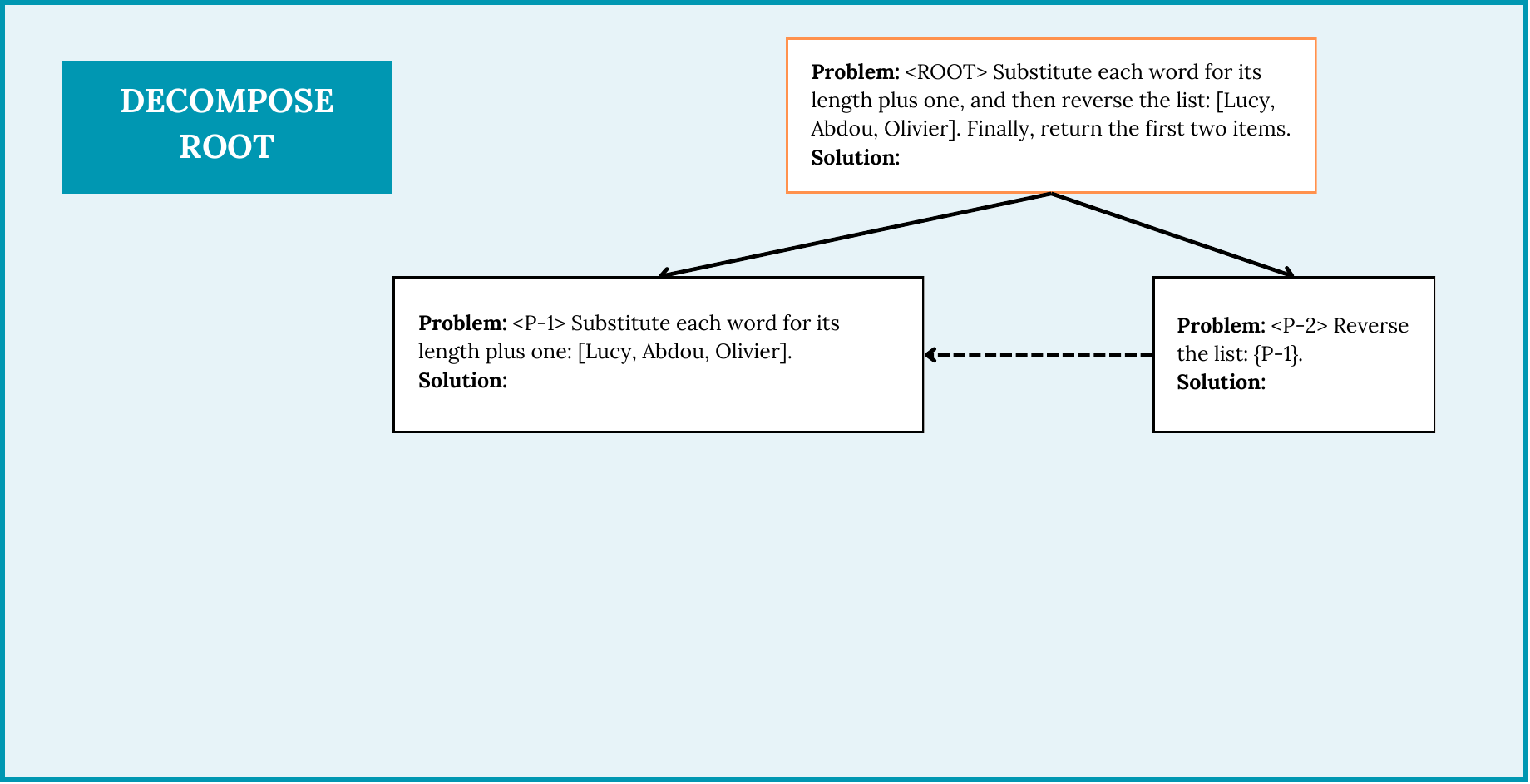}
\end{figure}

\begin{figure}[H]
    \centering
    \includegraphics[width=0.97\linewidth,keepaspectratio]{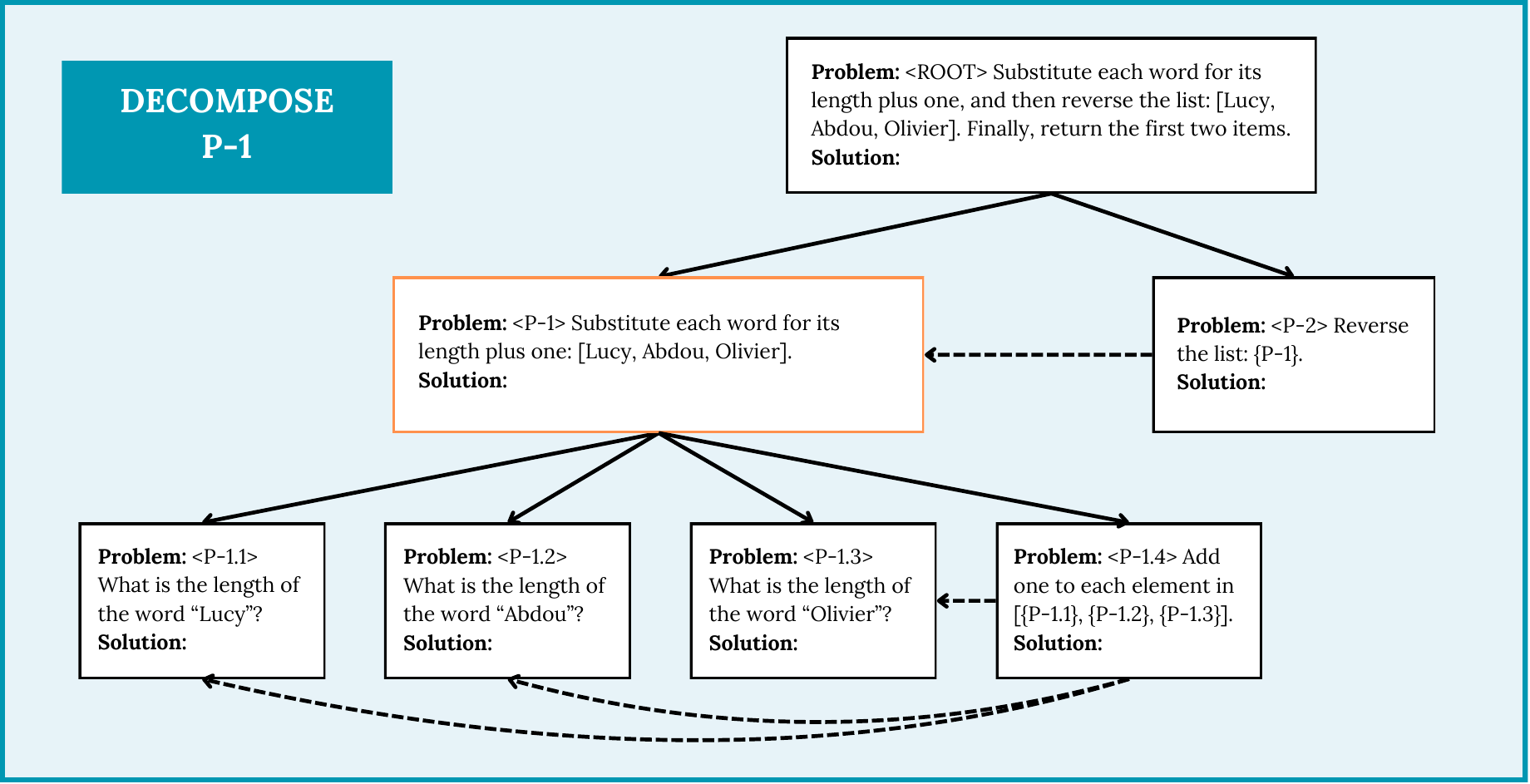}
\end{figure}

\begin{figure}[H]
    \centering
    \includegraphics[width=0.97\linewidth,keepaspectratio]{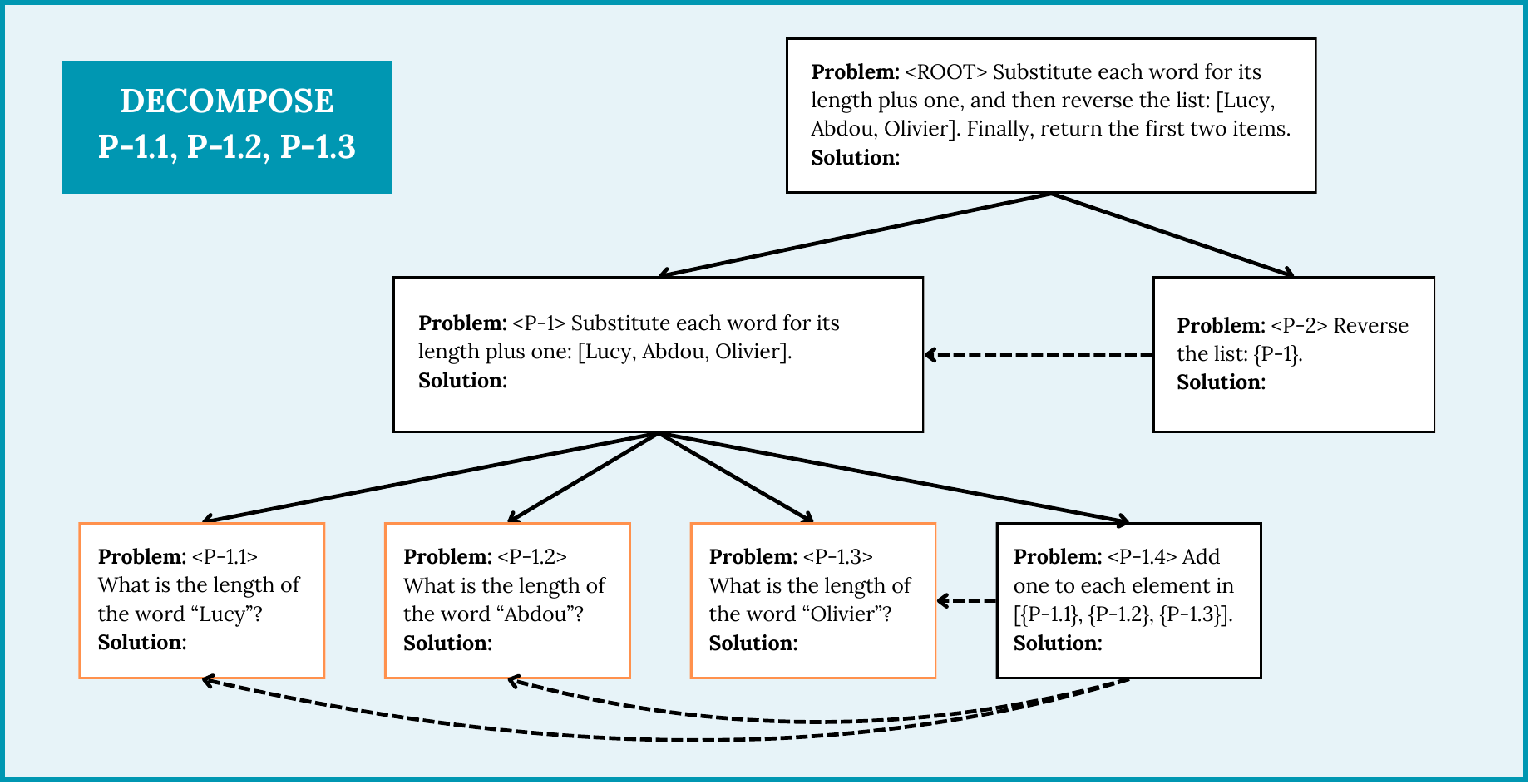}
\end{figure}

\begin{figure}[H]
    \centering
    \includegraphics[width=0.97\linewidth,keepaspectratio]{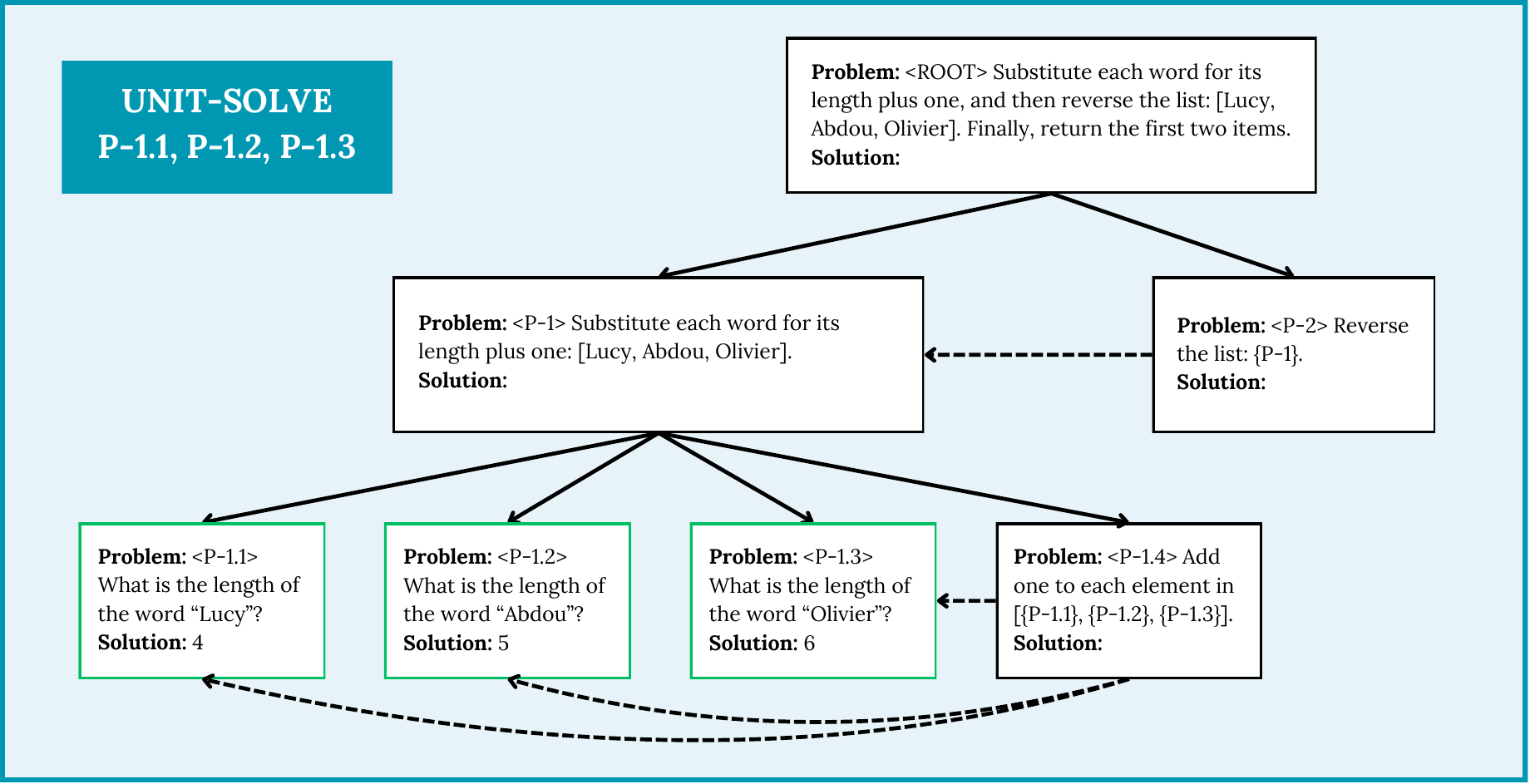}
\end{figure}

\begin{figure}[H]
    \centering
    \includegraphics[width=0.97\linewidth,keepaspectratio]{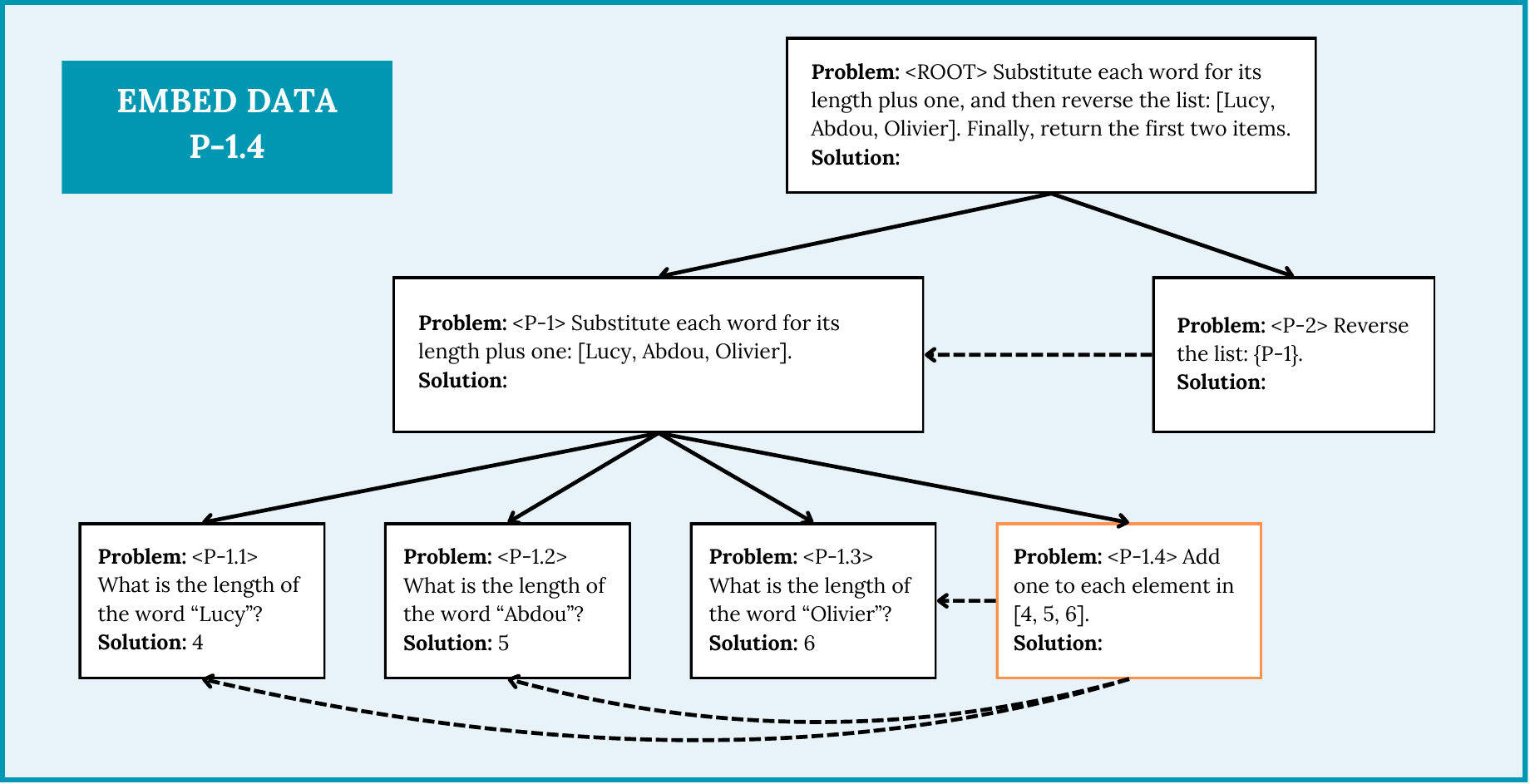}
\end{figure}

\begin{figure}[H]
    \centering
    \includegraphics[width=0.97\linewidth,keepaspectratio]{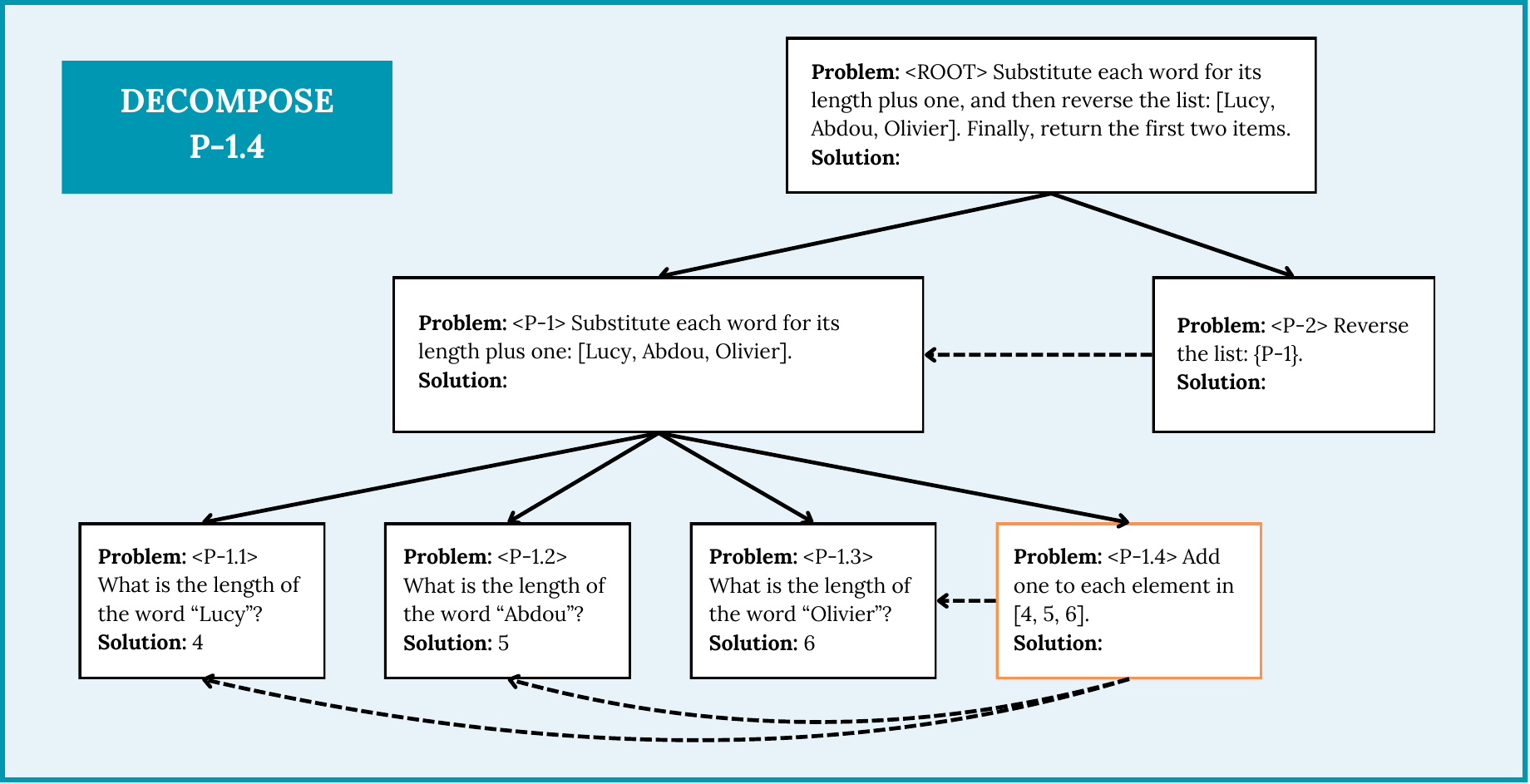}
\end{figure}

\begin{figure}[H]
    \centering
    \includegraphics[width=0.97\linewidth,keepaspectratio]{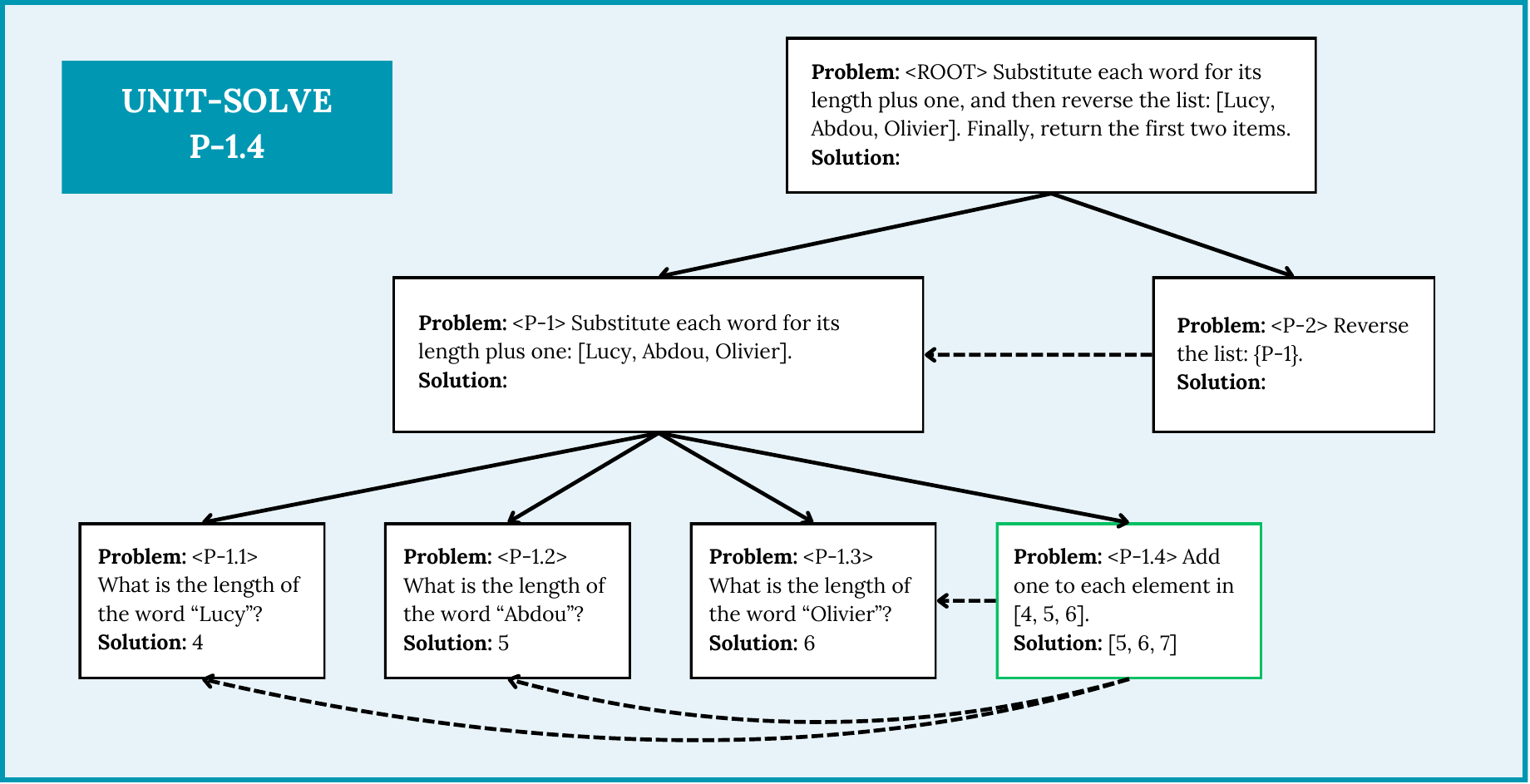}
\end{figure}

\begin{figure}[H]
    \centering
    \includegraphics[width=0.97\linewidth,keepaspectratio]{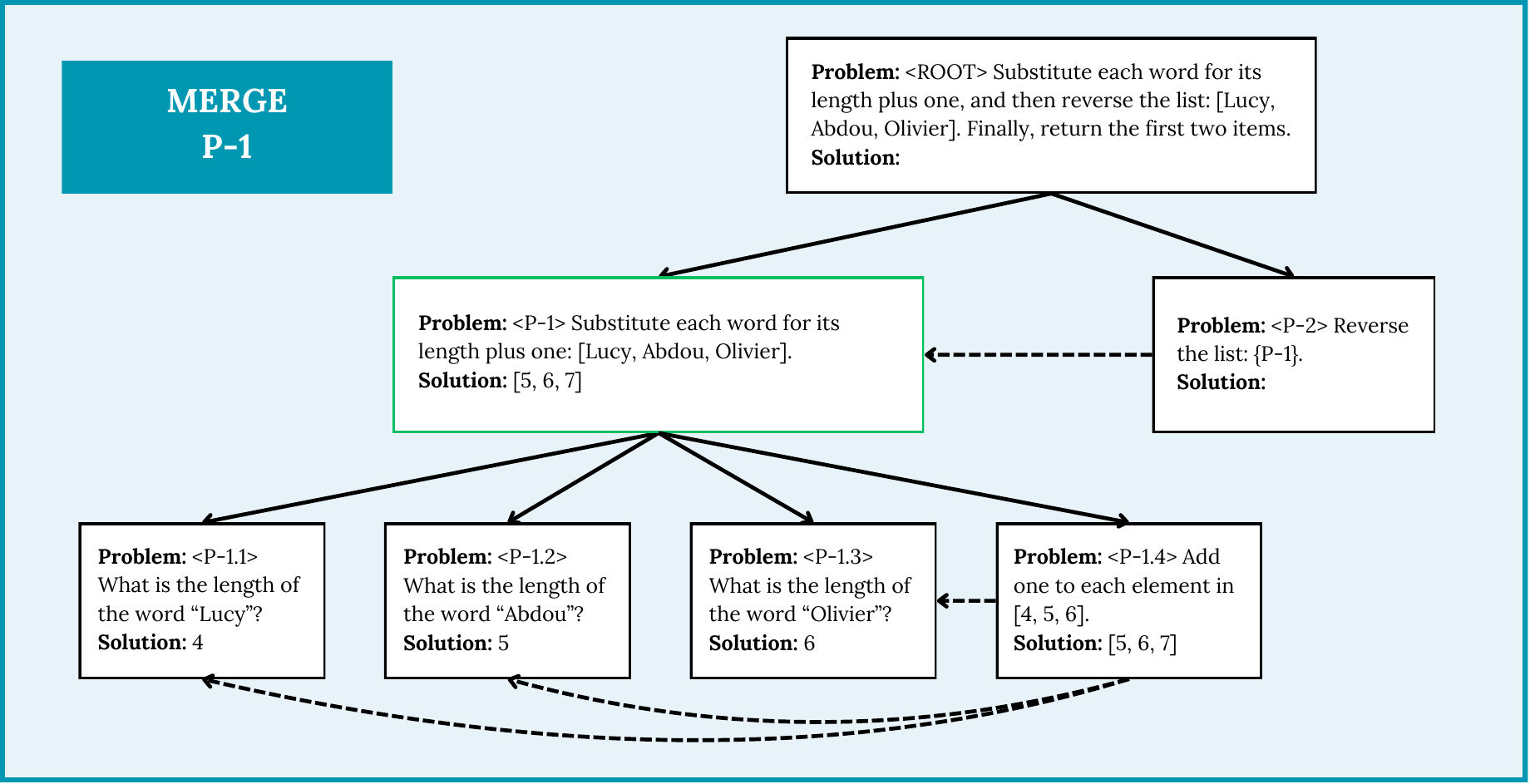}
\end{figure}

\begin{figure}[H]
    \centering
    \includegraphics[width=0.97\linewidth,keepaspectratio]{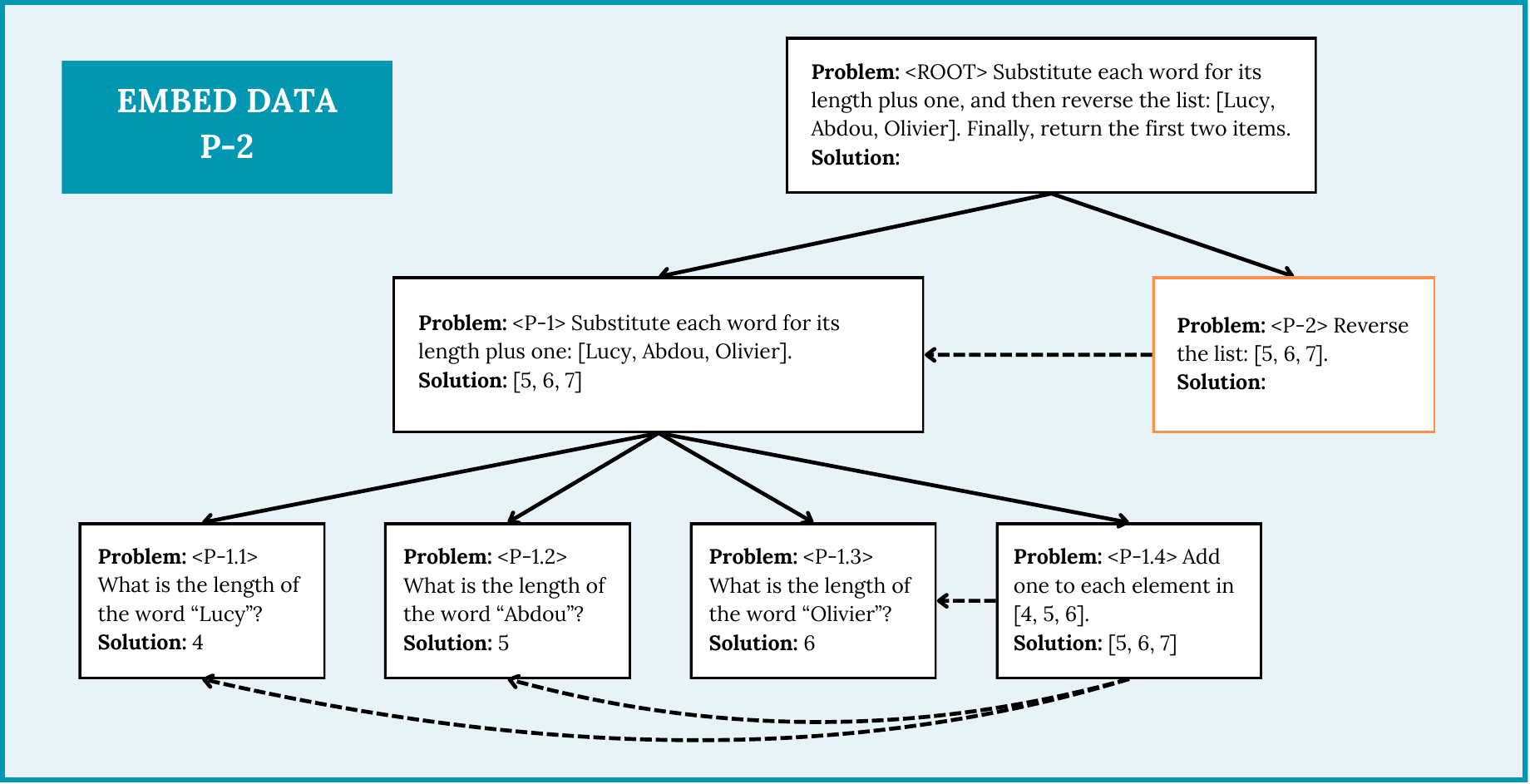}
\end{figure}

\begin{figure}[H]
    \centering
    \includegraphics[width=0.97\linewidth,keepaspectratio]{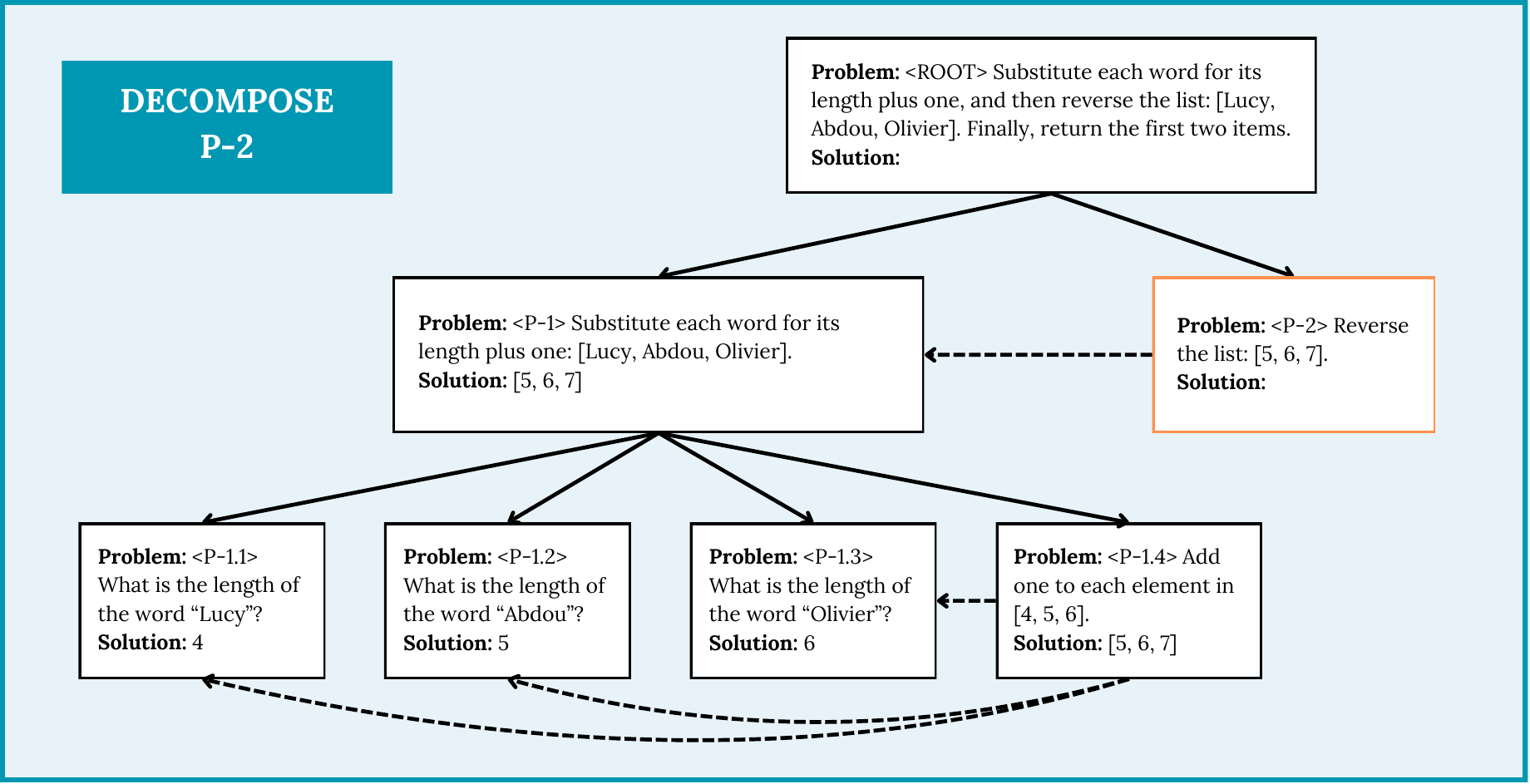}
\end{figure}

\begin{figure}[H]
    \centering
    \includegraphics[width=0.97\linewidth,keepaspectratio]{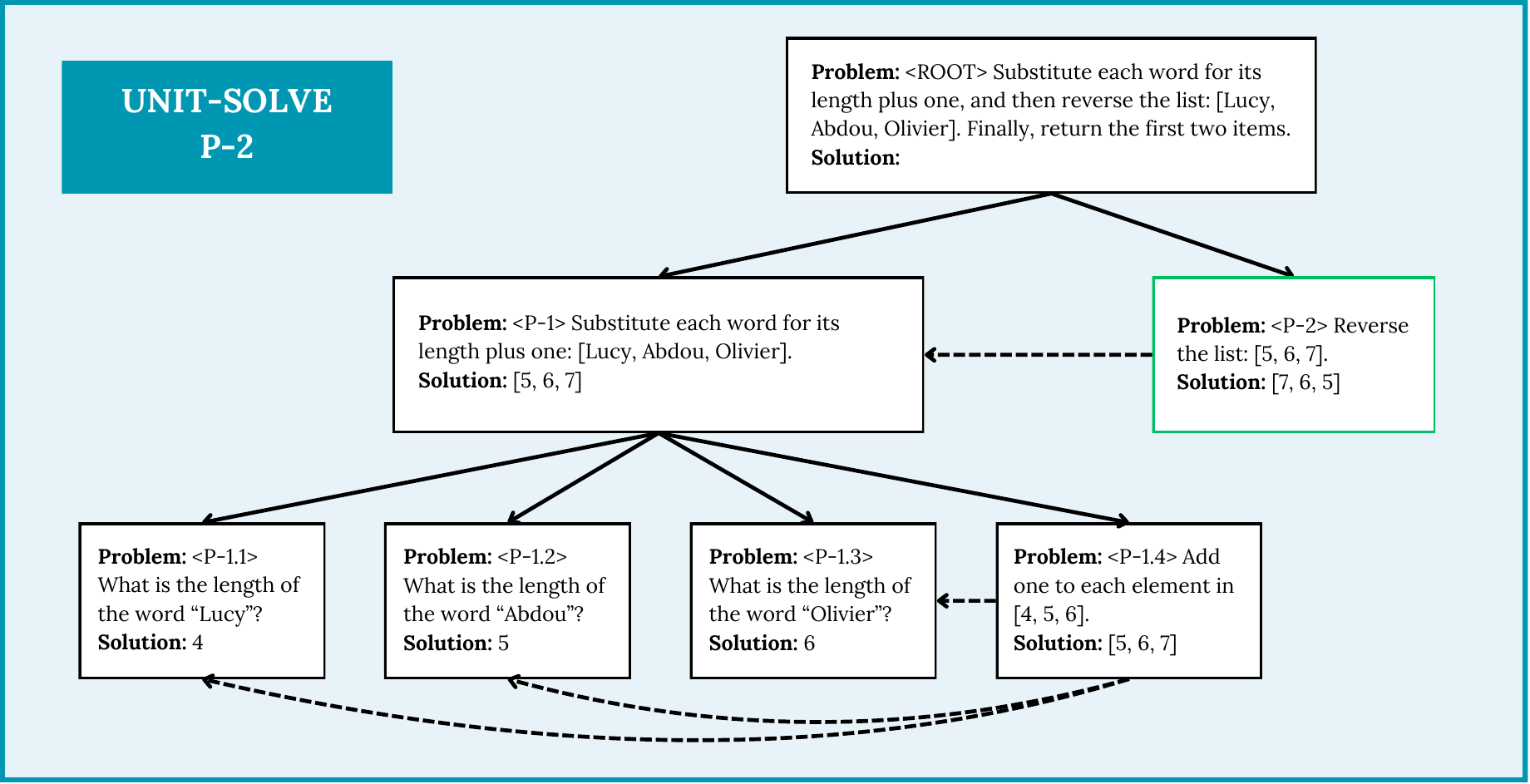}
\end{figure}

\begin{figure}[H]
    \centering
    \includegraphics[width=0.97\linewidth,keepaspectratio]{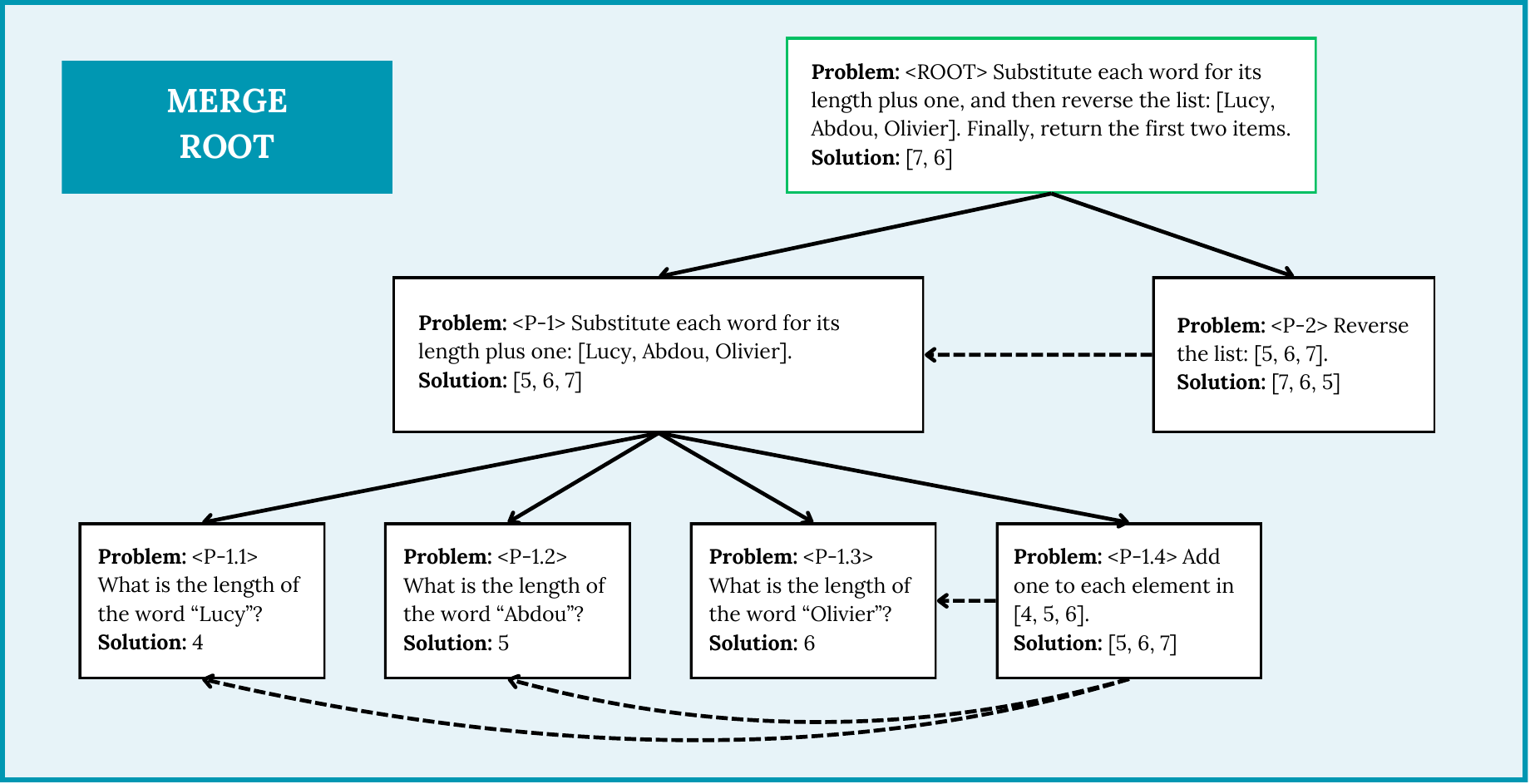}
\end{figure}

\end{document}